\documentclass{article}



\PassOptionsToPackage{numbers}{natbib}

\usepackage[preprint]{neurips_2022}



\usepackage[utf8]{inputenc} 
\usepackage[T1]{fontenc}    
\usepackage{hyperref}       
\usepackage{url}            
\usepackage{booktabs}       
\usepackage{amsfonts}       
\usepackage{nicefrac}       
\usepackage{microtype}      
\usepackage{xcolor}         
\usepackage{amsmath}
\usepackage{amssymb}
\usepackage{graphicx}
\usepackage{siunitx}
\usepackage{wrapfig}
\usepackage{makecell}
\usepackage{amsthm}

\newtheorem{lemma}{Lemma}
\newtheorem{prop}{Proposition}

\title{Adaptive Gradient Methods at the Edge of Stability}

%

\author{%
    Jeremy Cohen\thanks{Work done as student researcher.  Correspondence to \{jeremycohen@cmu.edu, gilmer@google.com\}.} \quad  Behrooz Ghorbani \quad Shankar Krishnan \quad Naman Agarwal \And \quad Sourabh Medapati \quad Michal Badura \quad Daniel Suo \quad David Cardoze \And \quad Zachary Nado \quad George E. Dahl \quad Justin Gilmer \\ \\ Google Research, Brain Team
}

\begin{document}

\maketitle

\begin{abstract}
   Very little is known about the training dynamics of adaptive gradient methods like Adam in deep learning.  In this paper, we shed light on the behavior of these algorithms in the full-batch and sufficiently large batch settings.  Specifically, we empirically demonstrate that during full-batch training, the maximum eigenvalue of the \emph{preconditioned} Hessian typically equilibrates at a certain numerical value --- the stability threshold of a gradient descent algorithm.  For Adam with step size $\eta$ and $\beta_1 = 0.9$, this stability threshold is $38/\eta$.  Similar effects occur during minibatch training, especially as the batch size grows.  Yet, even though adaptive methods train at the “Adaptive Edge of Stability” (AEoS), their behavior in this regime differs in a significant way from that of non-adaptive methods at the EoS.  Whereas non-adaptive algorithms at the EoS are blocked from entering high-curvature regions of the loss landscape, adaptive gradient methods at the AEoS keep advancing into high-curvature regions, while adapting the preconditioner to compensate.  Our findings can serve as a foundation for the community’s future understanding of adaptive gradient methods in deep learning.
\end{abstract}

\begin{figure}[h]
    \begin{center}
        \includegraphics[width=14cm]{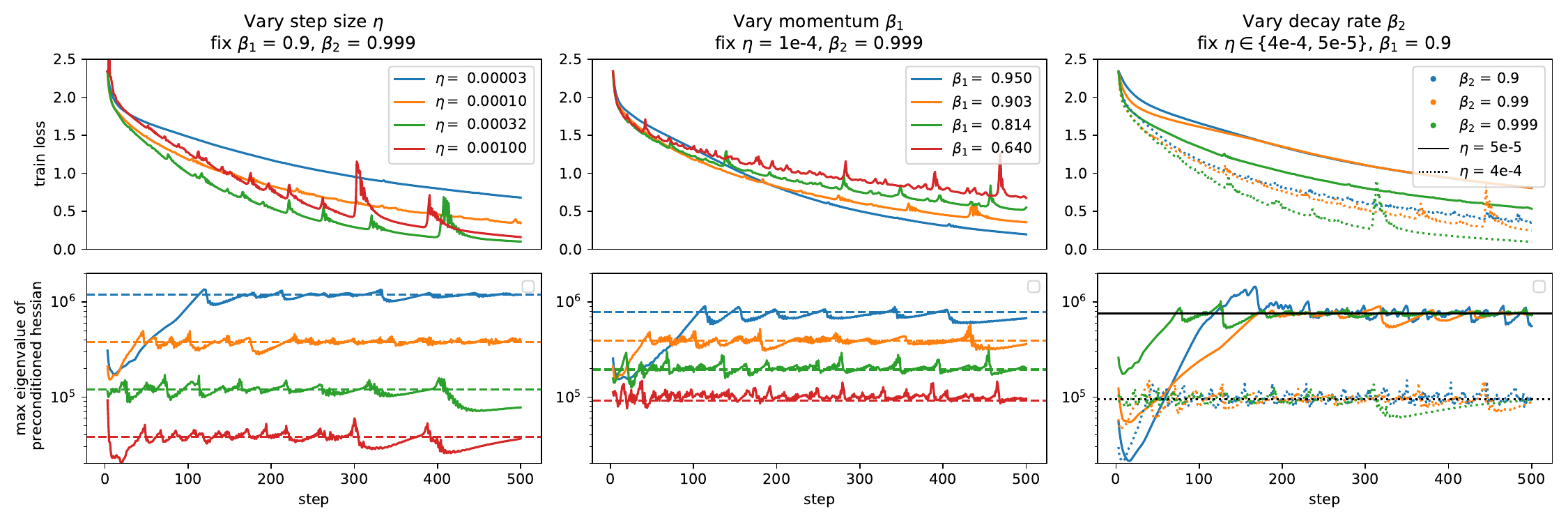}
        \caption{\textbf{Full-batch Adam trains at the Adaptive Edge of Stability (AEoS)}.  We train a fully-connected network on CIFAR-10 using full-batch Adam with various hyperparameters $\eta, \beta_1, \beta_2$.  Observe that the maximum eigenvalue of the preconditioned Hessian equilibrates at the numerical value $\frac{(2 + 2 \beta_1)}{(1 - \beta_1) \eta}$, which is drawn as a dashed horizontal line.  However, in contrast to the non-adaptive EoS, the maximum eigenvalue of the raw  Hessian usually keeps rising at the AEoS (see Figure \ref{fig:hessian-rises}).}
    \label{fig:adam}
    \end{center}
\end{figure}

\newpage 

\section{Introduction}

Neural networks are often trained using adaptive gradient methods such as Adam \citep{kingma2014adam}, rmsprop \citep{tieleman2012divide}, or Adafactor \citep{shazeer2018adafactor}.
These algorithms are variants of preconditioned gradient descent where the preconditioner is constantly \emph{adapting} to gradients from recent steps.
Despite the popularity of these algorithms, their behavior when training neural networks remains poorly understood.

Meanwhile, a line of recent work \citep{nar2018step, wu2018how, jastrzkebski2018relation, jastrzebski2020break, cohen2021gradient, gilmer2021loss} has shed light on the neural network training dynamics of \emph{non-adaptive} gradient descent (both with and without momentum) in, at least, the full-batch and (sufficiently) large-batch regimes.
These works have empirically demonstrated that \emph{dynamical instability} plays a key role in the training process.
When training neural networks, gradient descent is constantly attracted to regions of parameter space with increasingly high curvature \citep{jastrzkebski2018relation, jastrzebski2020break, cohen2021gradient}; yet, up to a quadratic Taylor approximation, gradient descent is an unstable dynamical system in regions where the curvature in any direction exceeds a certain threshold --- the optimizer cannot linger for long in any such region without being expelled \citep{nar2018step, wu2018how, jastrzebski2020break}.
How is this tension resolved?
In the full-batch special case, gradient descent spends the bulk of training in a regime called the Edge of Stability (EoS) \citep{cohen2021gradient} in which the \emph{sharpness} --- the maximum eigenvalue of the training Hessian --- hovers right at, or just above, the stability threshold.
At the EoS, gradient descent would still be moving into regions of higher curvature were it not being constantly repelled from these high-curvature regions by unstable dynamics.
As we confirm below, these findings also generalize to preconditioned gradient descent (with a static preconditioner).

However, it has not been clear whether these findings have relevance for \emph{adaptive} gradient methods.
Because of adaptive preconditioning, adaptive gradient methods do not evolve as linear recurrences on the local quadratic Taylor approximation, and thus it is not clear why their local stability would be well-modeled by an eigenvalue condition.

In this paper, we demonstrate that the EoS phenomenon does, in fact, carry over to the adaptive setting.
Our key empirical finding is that throughout training, the short-term stability behavior of, say, Adam is well-approximated by that of ``frozen Adam'' --- a version of Adam in which the preconditioner is frozen at its current value.
On the local quadratic Taylor approximation, ``frozen Adam'' \emph{does} evolve as a linear recurrence, and is unstable whenever the maximum eigenvalue of the preconditioned Hessian (the \emph{preconditioned sharpness}) exceeds the stability threshold of EMA-style heavy ball momentum, which is $\frac{2 + 2 \beta_1}{\eta (1 - \beta_1)}$.
Indeed, we observe that during full-batch training by real Adam, the preconditioned sharpness equilibrates at this precise numerical value (Figure \ref{fig:adam}).
During minibatch training, similar phenomena occur, paralleling the situation with non-adaptive optimizers.

However, even though adaptive gradient methods train at the ``Adaptive Edge of Stability'' (AEoS), their behavior in this regime differs in a significant way from that of non-adaptive methods in the non-adaptive EoS regime: whereas non-adaptive optimizers in the non-adaptive EoS regime are blocked from accessing high-curvature regions of the loss landscape, we find that adaptive gradient methods at the AEoS can and do enter these high-curvature regions via their ability to adapt the preconditioner.
This is especially liable to occur if the step size is small or the preconditioner decay factor (e.g. Adam's $\beta_2$) is small.
Thus, adaptive gradient methods sometimes lack the implicit inductive bias \citep{neyshabur2014search} that blocks non-adaptive methods from converging to high-curvature solutions.

\section{Notation and Background}
\label{sec:background}

Optimization algorithms aim to minimize an objective function $f: \mathbb{R}^p \to \mathbb{R}$ by producing a sequence of iterates $\{\mathbf{x}_t\} \subset \mathbb{R}^p$.
We denote an algorithm ``alg'' with hyperparameters $a, b$ as $\textsc{Alg}(a, b)$.
We write $\mathbf{H}(\mathbf{x})$ for the Hessian at parameter $\mathbf{x}$, and sometimes write $\mathbf{H}_t$ as an abbreviation for $\mathbf{H}(\mathbf{x}_t)$, the Hessian at step $t$.
We denote the largest positive eigenvalue of a matrix $\mathbf{H}$ as $\lambda_1(\mathbf{H})$.
We call the largest positive eigenvalue of a Hessian the \emph{sharpness}, and the largest positive eigenvalue of a preconditioned Hessian the \emph{preconditioned sharpness}.

\paragraph{Gradient descent and momentum}
The most basic optimization algorithm is $\textsc{GD}(\eta)$, defined by the parameter update $\mathbf{x}_{t+1} = \mathbf{x}_t - \eta \, \mathbf{g}_{t+1}$,  where $\mathbf{g}_{t+1}$ is a full-batch or minibatch gradient of $f$ computed at $\mathbf{x}_t$.
Vanilla gradient descent can be accelerated by the use of momentum, in which case the parameter update becomes $\mathbf{x}_{t+1} = \mathbf{x}_t - \eta \, \mathbf{m}_{t+1}$, where $\mathbf{m}_{t+1}$ is a momentum vector.
Momentum comes in two popular flavors: heavy ball (HB) and Nesterov.
Both of these, in turn, can be parameterized in the ``standard'' fashion or the ``EMA'' (exponential moving average) fashion.
The momentum update for $\textsc{StandardHB}(\eta, \beta_1)$ is $\mathbf{m}_{t+1} = \beta_1 \, \mathbf{m}_t + \mathbf{g}_{t+1}$, whereas the update for  $\textsc{EmaHB}(\eta, \beta_1)$ is $\mathbf{m}_{t+1} = \beta_1 \mathbf{m}_t + (1 - \beta_1) \mathbf{g}_{t+1}$.

\begin{wrapfigure}{l}{0.35\textwidth}
  \begin{center}
    \includegraphics[width=0.32\textwidth]{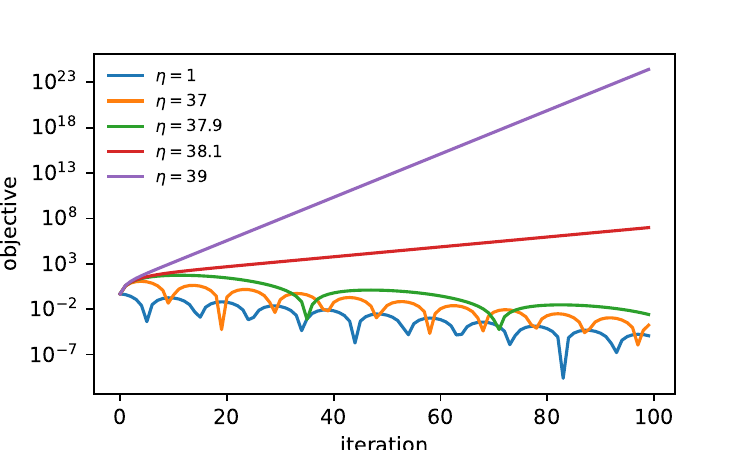}
  \end{center}
  \caption{We optimize the quadratic objective $f(x) = \frac{1}{2} x^2$ using $\textsc{EmaHB}(\eta, 0.9)$ at various $\eta$.  Observe that learning rates $\eta$ above 38 diverge.}
  \label{fig:ema-momentum}
\end{wrapfigure}

Each of these algorithms, when run on any quadratic function (and in particular, when run on the local quadratic Taylor approximation to a neural training objective), evolves independently along each Hessian eigenvector as a linear recurrence relation (Appendix A).
If any Hessian eigenvalue exceeds a certain algorithm-dependent threshold, then the linear recurrence relation for the corresponding eigenvector is \emph{unstable} and diverges exponentially; this implies that the iterate will oscillate with exponentially increasing magnitude along this eigenvector.
The stability thresholds are: $2/\eta$ for $\textsc{GD}(\eta)$, $\frac{2 + 2 \beta_1}{\eta}$ for $\textsc{StandardHB}(\eta, \beta_1)$, and $\frac{2 + 2 \beta_1}{\eta (1 - \beta_1)}$ for $\textsc{EmaHB}(\eta, \beta_1)$.

We illustrate this threshold behavior in Figure \ref{fig:ema-momentum}, where we optimize the one-dimensional quadratic function $f(x) = \frac{1}{2} x^2$ using EMA-style heavy ball momentum with $\beta_1 = 0.9$ and varying learning rates.
Observe that the iterates diverge whenever the learning rate $\eta$ exceeds $\frac{2 + 2 \beta_1}{(1 - \beta_1)} = 38$.

\paragraph{Preconditioned gradient descent}
Gradient descent, or its momentum variants, can be preconditioned using a (static) preconditioner $\mathbf{P}$.
In this case, the update rule becomes $ \mathbf{x}_{t+1} = \mathbf{x}_t - \eta \, \mathbf{P}^{-1} \, \mathbf{m}_{t+1}$, where $\mathbf{m}_t$ is updated according to one of the momentum rules described above.
Often, $\mathbf{P}$ is a diagonal matrix, in which case $\mathbf{P}^{-1}$ can be interpreted as per-parameter learning rates.
A special case of particular interest in this paper is preconditioned gradient descent with EMA-style heavy ball momentum, which we denote as $\textsc{PrecondEmaHB}(\eta, \beta_1, \mathbf{P})$.
On a quadratic objective with Hessian $\mathbf{H}$, preconditioned gradient descent will diverge if the maximum eigenvalue of the \emph{preconditioned Hessian} $\mathbf{P}^{-1/2} \mathbf{H} \mathbf{P}^{-1/2}$ (the \emph{preconditioned sharpness}) exceeds the stability threshold of the non-preconditioned algorithm.
In that event, the component of the iterate that is aligned with $(\mathbf{P}^{1/2})^T \mathbf{v}_1$, where $\mathbf{v}_1$ is the top eigenvector of $\mathbf{P}^{-1/2} \mathbf{H} \mathbf{P}^{-1/2}$, will oscillate with exponentially increasing magnitude, and diverge.
Note that $\mathbf{P}^{-1/2} \mathbf{H} \mathbf{P}^{-1/2}$ shares eigenvalues with the similar matrix $\mathbf{P}^{-1} \mathbf{H}$, so the preconditioned sharpness can be written as both $\lambda_1(\mathbf{P}^{-1/2} \mathbf{H} \mathbf{P}^{-1/2})$ or $\lambda_1(\mathbf{P}^{-1} \mathbf{H})$.

\paragraph{Adaptive gradient methods}
\emph{Adaptive} gradient methods are variants of preconditioned gradient descent in which the preconditioner is changing rather than fixed, i.e. the parameter update is $\mathbf{x}_{t+1} = \mathbf{x}_t - \eta \mathbf{P}_{t+1}^{-1} \mathbf{m}_{t+1}$, where $\mathbf{P}_{t+1}$ is the latest preconditioner.
For example, $\textsc{Rmsprop}(\eta, \beta_2, \epsilon)$ \citep{tieleman2012divide} updates its preconditioner according to:
\begin{align}
    \boldsymbol{\nu}_{t+1} = \beta_2 \boldsymbol{\nu}_t + (1 - \beta_2)\mathbf{g}_{t+1}^{\circ 2} \quad\quad \mathbf{P}_{t+1} = \text{diag}(\boldsymbol{\nu}_{t+1}^{1/2}) + \epsilon \, \mathbf{I}. \label{eq:rmsprop-precondition}
\end{align}
A more popular algorithm is $\textsc{Adam}(\eta, \beta_1, \beta_2, \epsilon)$ \citep{kingma2014adam}, which adds EMA-style heavy ball momentum to rmsprop.
Adam has an optional ``bias correction'' scheme.
With no bias correction, Adam employs the rmsprop preconditioner rule in Eq \eqref{eq:rmsprop-precondition}; with bias correction, the Adam preconditioner is:
\begin{align}
    \mathbf{P}_{t+1} &= \left(1 - \beta^{t+1}_1 \right) \; \left[ \text{diag} \left (\sqrt{\frac{\boldsymbol{\nu}_{t+1}}{1 - \beta_2^{t+1}}}\right) + \epsilon \, \mathbf{I} \right].
    \label{eq:adam-precondition}
\end{align}
In contrast to gradient descent (and preconditioned gradient descent), adaptive gradient methods do not evolve as linear recurrence relations on quadratic functions.
Thus, it is a priori unclear whether their local stability can be modeled using an eigenvalue condition.

\section{Related Work}
\label{sec:related-work}

\paragraph{Training dynamics of non-adaptive gradient descent}
Recent empirical studies \citep{jastrzebski2020break, lewkowycz2020large, cohen2021gradient, lee2022implicit} have shed substantial light on the training dynamics of non-adaptive gradient descent in deep learning.
When training neural networks, gradient descent tends to continually move in a direction that increases the sharpness \citep{jastrzkebski2018relation, jastrzebski2020break, cohen2021gradient}.
This phemonemon was dubbed \emph{progressive sharpening} in \citep{cohen2021gradient}, and remains poorly understood.
In the special case of \emph{full-batch} gradient descent, the sharpness usually rises past the optimizer's stability threshold, at which point (the ``breakeven point'' \citep{jastrzebski2020break}) the optimizer becomes destabilized by exponentially growing movement along the Hessian's top eigenvector.
On quadratic objective functions, this behavior would lead to divergence.
However, neural network training objectives are not quadratic, and gradient descent typically does not diverge; instead, it enters a regime called the Edge of Stability (EoS) \citep{cohen2021gradient} in which the sharpness hovers just above, or oscillates around, the stability threshold.
At the EoS, gradient descent is constantly being repelled from regions of the loss landscape with sharpness exceeding the stability threshold.
Prior work has not discussed preconditioned gradient descent, but in the next section we confirm that the results of \citep{cohen2021gradient} carry over to the non-adaptive preconditioned setting.
It remains unclear how gradient descent is able to safely train at the EoS without diverging \cite{ahn2022understanding, ma2022multiscale}, though \citep{ma2022multiscale} has suggested that ``subquadratic growth'' of the training objective may play a role.
Note that these results only apply to \emph{full-batch} gradient descent.  In the more general case of SGD, similar effects seem to occur \citep{jastrzkebski2018relation, jastrzebski2020break}, as we review in \S \ref{sec:sgd}.

The fact that non-adaptive gradient descent is blocked from entering sharp (as quantified by maximum Hessian eigenvalue) regions of the loss landscape constitutes one implicit bias \citep{neyshabur2014search} of non-adaptive gradient descent.
It is plausible that this implicit bias could impact generalization (e.g. see \citep{mulayoff2021the, jastrzebski2021catastrophic}).

\paragraph{Understanding adaptive gradient methods}
There have been a number of convergence analyses of adaptive gradient methods \citep{chen2018convergence, zhou2018convergence, li2019convergence, ward2019adagrad, xie2020linear, defossez2020simple}, none of which model the behavior that we observe here.
Beyond formal convergence analyses, \citep{barakat2018convergence, da2020general} proposed to model Adam as an system of ordinary differential equations in continuous time; this approach also cannot explain the unstable dynamics that we observe.
\citep{balles2018dissecting} argued that Adam should be viewed as a variant of sign gradient descent.
Most relevant to our paper, \cite{ma2020qualitative} conducted a qualitative study of full-batch Adam during neural network training.
They observed small bumps and large spikes in the training loss curve; our paper explains this phenomenon as arising from dynamical instability.

\paragraph{Implicit bias of adaptive gradient methods}

Absent well-tuned regularization, adaptive gradient methods have been reported to generalize worse than non-adaptive optimizers \citep{wilson2017marginal, keskar2017improving, loshchilov2016sgdr}.
However, it has been far from clear why, in deep learning, adaptive optimizers should even find consistently different solutions than non-adaptive optimizers (let alone why these different solutions should generalize worse).
\citep{wilson2017marginal} constructed a synthetic task where Adam provably generalizes worse than gradient descent.
However, synthetic tasks also exists where the reverse is true \citep{agarwal2020disentangling}.
\citep{wang2021implicit} proved that rmsprop on homogenous neural networks is, like gradient descent \citep{lyu2019gradient}, implicitly biased towards maximum margin solutions.
\citep{granziol2020explaining} argued that adaptive methods take steps that are too large in small-Hessian-eigenvalue directions, which are more likely to be noise than signal.

Our paper gives evidence for a different implicit bias: adaptive gradient methods are liable to find higher-curvature solutions than non-adaptive algorithms, since whereas non-adaptive algorithms are blocked from high-curvature regions, adaptive optimizers can evade this restriction.
\citep{wu2018how} briefly speculated that this might be the case, but ran no experiments with adaptive optimizers (only L-BFGS).
Meanwhile, \citep{granziol2020flatness} empirically observed that a VGG trained using an adaptive optimizer had 40x the sharpness than when trained using SGD, and \citep{chen2021vision} found that MLP-mixers and ViTs trained with Adam converged to far sharper solutions than a ResNet trained with SGD.

\citep{zhou2020towards} and \citep{xie2020adai} previously argued theoretically that SGD can more quickly escape certain ``sharp minima'' than Adam, but using a different definition of sharpness, and a mechanism tied inextricably to stochasticity.
In contrast, we demonstrate that even full-batch Adam converges to higher-curvature solutions than full-batch gradient descent, implying that stochasticity is not crucial.

\section{Full-batch adaptive optimizers train at the Adaptive Edge of Stability}

\begin{figure}[h!]
    \includegraphics[width=14cm]{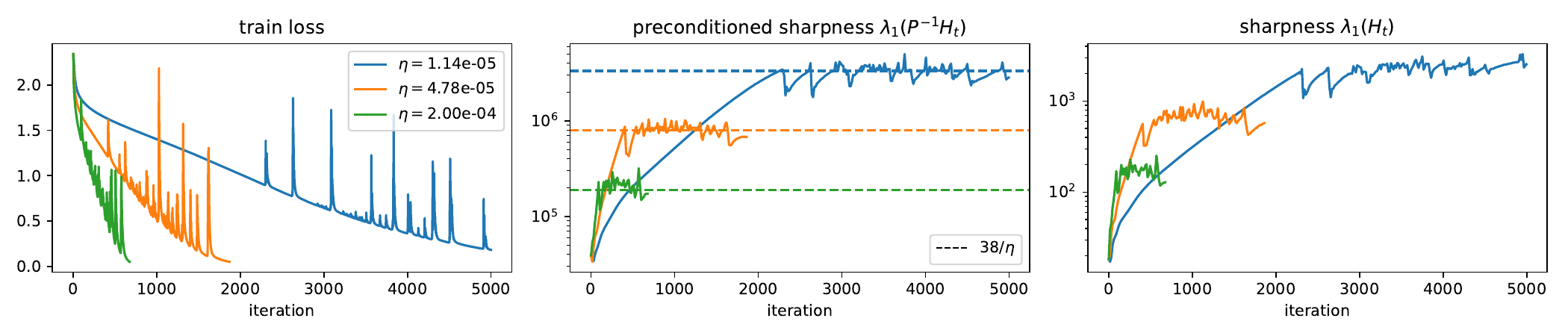}
    \caption{\textbf{``Frozen Adam'' (preconditioned momentum) trains at the Edge of Stability.}  We train a fully-connected network on CIFAR-10 using ``frozen Adam,'' i.e. preconditioned gradient descent with EMA-style heavy ball momentum and a fixed preconditioner $\mathbf{P}$. Consistent with \citep{cohen2021gradient}, the preconditioned sharpness $\lambda_1(\mathbf{P}^{-1} \, \mathbf{H}_t)$ rises until equilibrating at the stability threshold of $38/\eta$.  The raw sharpness  $\lambda_1(\mathbf{H}_t)$ mostly ceases to increase at the EoS.}
    \label{fig:fixed-preconditioner}
\end{figure}

\paragraph{Warm-up: ``frozen Adam''}
Before turning to adaptive preconditioning, we first confirm that prior work \citep{cohen2021gradient} generalizes to the setting of preconditioned gradient descent.
We train a network using full-batch Adam (no bias correction), and extract the second moment accumulator $\boldsymbol{\nu}_t$ at some midpoint step $t$ during training.
We then train the same architecture from scratch at a range of learning rates using full-batch ``frozen Adam,'' i.e. Adam where the second-moment accumulator is held fixed at that particular  $\boldsymbol{\nu}_t$ rather than updated via Eq \eqref{eq:rmsprop-precondition} or Eq \eqref{eq:adam-precondition}.
Note that ``frozen Adam'' is equivalent to $\textsc{PrecondEmaHB}(\eta, \beta_1, \mathbf{P})$ where $\mathbf{P} = \text{diag}(\boldsymbol{\nu}_t^{1/2}) + \epsilon \mathbf{I}$.
Thus frozen Adam is dynamically unstable on the training objective's quadratic Taylor approximation whenever the preconditioned sharpness $\lambda_1(\mathbf{P}^{-1} \, \mathbf{H}_t)$ exceeds the stability threshold for EMA-style heavy ball momentum.
This threshold is $\frac{2 + 2 \beta_1}{(1 - \beta_1) \eta} = \frac{38}{\eta}$ for our choice of $\beta_1 = 0.9$.
Assuming that the findings of \citep{cohen2021gradient} generalize from non-preconditioned to preconditioned gradient descent, one would expect that during full-batch training by frozen Adam, the preconditioned sharpness would rise until plateauing at that threshold value.
In Figure \ref{fig:fixed-preconditioner}(b), we confirm that this occurs.
For each of several learning rates $\eta$, we plot the evolution of the preconditioned sharpness $\lambda_1(\mathbf{P}^{-1} \mathbf{H}_t)$ while visualizing the stability threshold of $38/\eta$ as a horizontal dashed line.
Observe that the preconditioned sharpness rises until plateauing at the dashed line.
We also plot the evolution of the ``raw'' sharpness $\lambda_1(\mathbf{H}_t)$ in Figure \ref{fig:fixed-preconditioner}(c).
For this network, the raw sharpness largely ceases to increase once training enters the EoS.

\paragraph{Adaptive preconditioning: Adam}
Adam is a more complex dynamical system than ``frozen Adam,'' because its preconditioner evolves in respose to recent gradients; consequently, its dynamics on the local quadratic Taylor approximation does not reduce to a linear recurrence.
Throughout physics and engineering, a time-tested way to understand complex systems is to approximate them by simpler ones that still capture some behavior of interest.
Consider $\text{Adam}$ at any particular step $t_0$, and let $\mathbf{P}_{t_0}$ be Adam's preconditioner at that step.
Over short time horizons $t > t_0$, one can approximate Adam by $\textsc{PrecondEmaHB}(\eta, \beta_1, \mathbf{P}_{t_0})$, i.e. ``frozen Adam'' where the preconditioner is frozen at $\mathbf{P}_{t_0}$.
On the local quadratic Taylor approximation, this simpler algorithm is unstable if the preconditioned sharpness $\lambda_1(\mathbf{P}_{t_0}^{-1} \mathbf{H}_{t_0})$ exceeds the stability threshold of heavy-ball EMA momentum, which is $\frac{2 + 2 \beta_1}{(1 - \beta_1) \eta}$.
Of course, whether this approximation is useful is an empirical question.
And empirically, we now show that Adam has similar stability properties as its frozen counterpart.

In Figure \ref{fig:adam}, we train a fully-connected network on the full CIFAR-10 dataset using full-batch Adam (no bias correction).
In Figure \ref{fig:adam}(a), we vary the learning rate $\eta$ while fixing $\beta_2 = 0.999$ and $\beta_1 = 0.9$.
Observe that for every learning rate $\eta$, the preconditioned sharpness $\lambda_1(\mathbf{P}_t^{-1} \mathbf{H}_t)$ equilibrates right around the stability threshold of frozen Adam.
In Figure \ref{fig:adam}(b), we vary $\beta_1$ as we fix $\eta = 0.0001$ and $\beta_2 = 0.999$.
Observe that the same phenomenon occurs.
Finally, in Figure \ref{fig:adam}(c), we vary $\beta_2 \in \{0.9, 0.99, 0.999\}$ as we fix $\beta_1 = 0.9$ and $\eta \in \{$5e-5, 4e-4$\}$.
The same phenomenon occurs, even when $\beta_2$ takes the relatively small value of 0.9, which is when one might expect Adam's behavior to differ the most from frozen Adam's, because the preconditioner adapts faster.
Thus, we see that Adam becomes unstable whenever its frozen counterpart would become unstable.
Nevertheless, in \S \ref{sec:explain}, we will see that Adam's behavior at the AEoS differs in other respects from that of frozen Adam at the EoS.

\paragraph{Other architectures}
In Figure \ref{fig:architectures}, we verify that the full-batch phenomenon generalizes to other computer vision architectures.
In particular, we consider (a) a CNN on CIFAR-10; (b) an un-normalized Wide ResNet (WRN) \citep{zagoruyko2016wide} on CIFAR-10; and (c) a (batch-normalized) WRN on CIFAR-100.
Furthermore, Figure \ref{fig:wmt_bs_comp} in the subsequent section verifies that our \emph{minibatch} findings apply to transformers on WMT machine translation.

\begin{figure}[h!]
    \includegraphics[width=14cm]{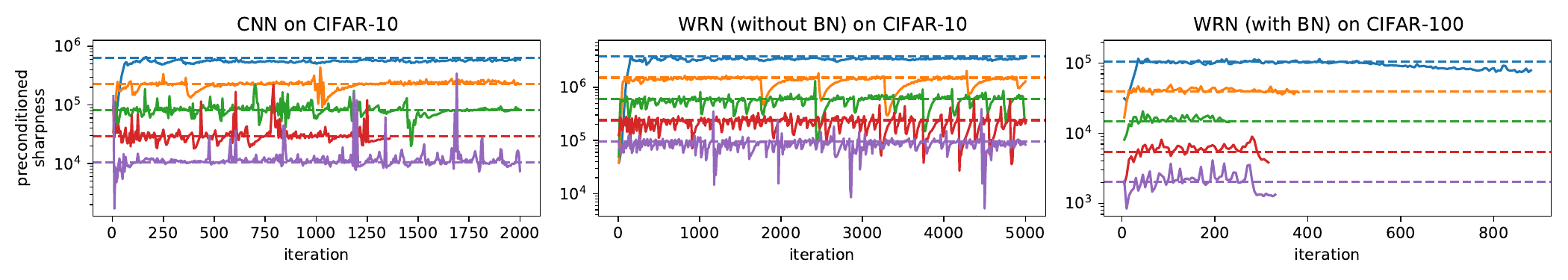}
    \caption{\textbf{The phenomenon generalizes to other architectures}.  We train three vision architectures using full-batch Adam with $\beta_1 = 0.9$ and $\beta_2= 0.999$ at a range of learning rates (colors).  In each case, the preconditioned sharpness equilibrates at the threshold $38/\eta$.  Each network was trained until either reaching a milestone train loss value, or until reaching a step limit.}
    \label{fig:architectures}
\end{figure}

\paragraph{Other full-batch adaptive gradient methods}
In Figure \ref{fig:algorithms}, we train the same fully-connected network on CIFAR-10 using eight adaptive gradient algorithms in the full-batch regime: (a) Adam with bias correction, which essentially decreases the preconditioner on a schedule in the early stage of training; (b) AdamW \citep{loshchilov2017decoupled}, which employs decoupled weight decay; (c) Adafactor \citep{shazeer2018adafactor}, which maintains a factored layer-wise preconditioner; (d) Amsgrad \citep{reddi2018convergence}, which ensures that the preconditioner is entrywise non-decreasing over time; (e) Padam \citep{chen2018closing}, which employs an exponent smaller than $\frac{1}{2}$ (we use $0.25$) in the definition of the preconditioner Eq \eqref{eq:rmsprop-precondition}; (f) Nadam \citep{Dozat2016IncorporatingNM}, which uses Nesterov rather than heavy ball momentum; (g) rmsprop \citep{tieleman2012divide}, which lacks momentum; and finally (h) Adagrad \citep{duchi11adaptive}, which preconditions using the sum of past squared gradients, rather than an exponential moving average.
In all cases, we observe that the preconditioned sharpness $\lambda_1(\mathbf{P}_t^{-1} \mathbf{H}_t)$ equilibrates around the appropriate stability threshold (written in parentheses).

\begin{figure}[h!]
    \includegraphics[width=14cm]{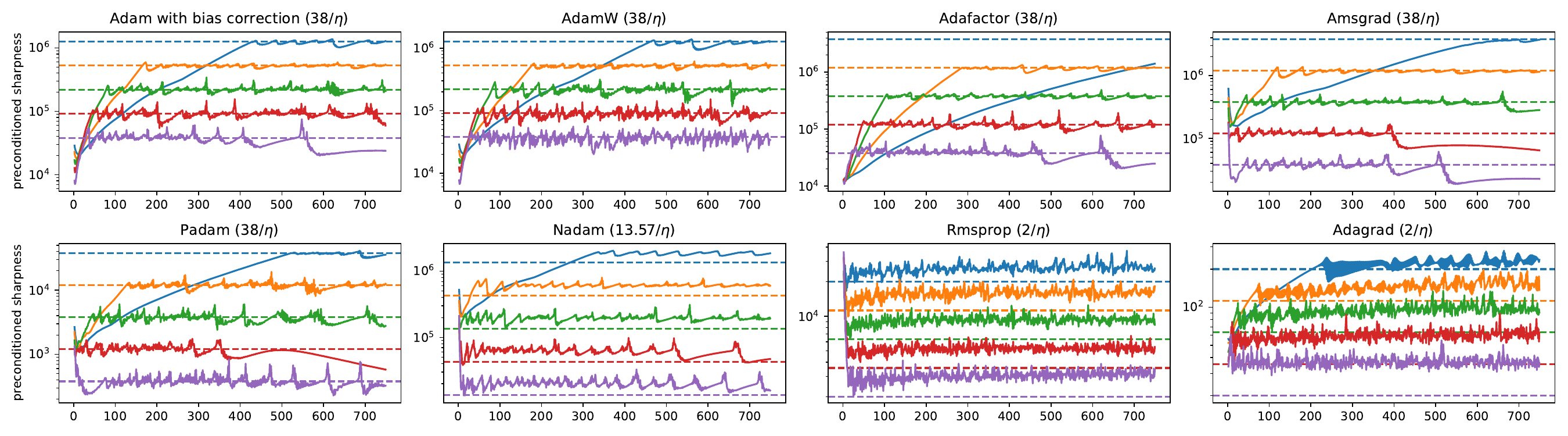}
    \caption{\textbf{Other adaptive gradient algorithms train at the AEoS}.  We train a FC network on CIFAR-10 using eight adaptive optimizers in full-batch mode.  We train each algorithm at five learning rates (colors). In each case, the preconditioned sharpness equilibrates at, or just above, the stability threshold (written in parentheses). Note that the qualitative behavior of the preconditioned sharpness depends on the presence and type of momentum; see Appendix C.}
    \label{fig:algorithms}
\end{figure}

\paragraph{A corner case}
When running full-batch Adam at extremely small learning rates, we sometimes observe that the preconditioned sharpness equilibrates \emph{short of} the stability threshold.  
We elaborate on this corner case, and offer an explanation, in Appendix D.

\section{The minibatch setting}
\label{sec:sgd}

We now move beyond the full-batch setting to the more general setting of minibatch training.  In the case of gradient descent and minibatch SGD, it is clear from prior work \citep{jastrzkebski2018relation, jastrzebski2020break} that during minibatch training, the sharpness is subject to similar effects as during full-batch training.
For one, provided that training is successful, the sharpness never ventures more than a bit above the stability threshold of the corresponding full-batch algorithm \citep{jastrzebski2020break}.  This can be explained by the fact that SGD is unstable in expectation whenever GD is unstable \citep{giladi2019stability}.  Yet additional factors seem to also be at play in the case of SGD.  Specifically, it has been observed that at small batch sizes (where there is more gradient noise), the mid-training sharpness is smaller.  One hypothesized explanation \citep{wu2018how, jastrzebski2020break} is that in the presence of gradient noise, SGD becomes unstable at lower sharpnesses. 
\begin{figure}[h!]
    \includegraphics[width=14cm]{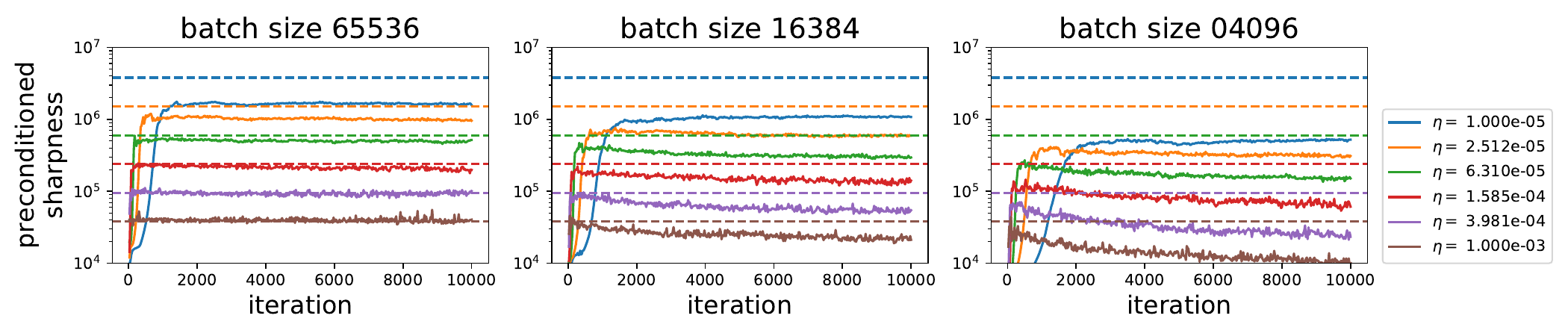}
    \caption{\textbf{During minibatch Adam, the preconditioned sharpness behaves analogous to the sharpness during minibatch SGD}.  We train a Resnet-50 on ImageNet using minibatch Adam.  The preconditioned sharpness is (1) below the full-batch stability threshold (pictured as a horizontal line), (2) smaller when the batch size is smaller, and (3) smaller when the learning rate is larger. }
    \label{fig:sgd}
\end{figure}

We now demonstrate that the behavior of the preconditioned sharpness during minibatch Adam parallels that of the sharpness during minibatch SGD.  Namely, we observe that during minibatch Adam, the preconditioned sharpness (1) never rises more than a bit beyond the stability threshold of the full-batch algorithm, provided that training succeeds; (2) tends to be smaller, mid-training, when the learning rate is large; and (3) tends to be smaller, mid-training, when the batch size is small. 

In Figure \ref{fig:sgd}, we train a Resnet-50 on ImageNet at three batch sizes.
Observe that for the largest batch size considered, the preconditioned sharpness behaves similar to the full-batch setting (almost rising to the full-batch stability threshold); yet, as the batch sizes decreases, the trajectory of the preconditioned sharpness shifts downwards.

In Figure \ref{fig:wmt_bs_comp}, we train both a pre-LayerNorm (Pre-LN) and a post-LayerNorm (Post-LN) Transformer on the En->De WMT translation task.
Following standard practice \citep{popel2018training}, we train using a linear warmup schedule lasting 40k steps.
We train at both a smaller batch size of 1024 and a larger batch size of 10240.
The dotted line depicts the stability threshold of $38/\eta$; this threshold is constantly decreasing as the learning rate is warmed up.
Observe that the preconditioned sharpness initially rises until reaching either the stability threshold (larger batch size), or a value just short of the stability threshold (smaller batch size).
Then, as learning rate warmup continues, the preconditioned sharpness decreases to track the stability threshold.
In effect, the preconditioned sharpness is being ``pushed down'' by Adam's learning rate warmup.
Note that \citep{gilmer2021loss} previously showed that a warmup schedule for momentum SGD likewise ``pushed down'' the (unpreconditioned) sharpness in Transformer training.
The Post-LN Transformer is known to be less stable than the Pre-LN variant \cite{xiong2020layer, liu2020understanding}; this instability can be seen in the spike in $\lambda_1(\mathbf{P}^{-1} \mathbf{H})$ (and training loss) towards the end of the warmup period.

\begin{figure}[h!]
    \includegraphics[width=14cm]{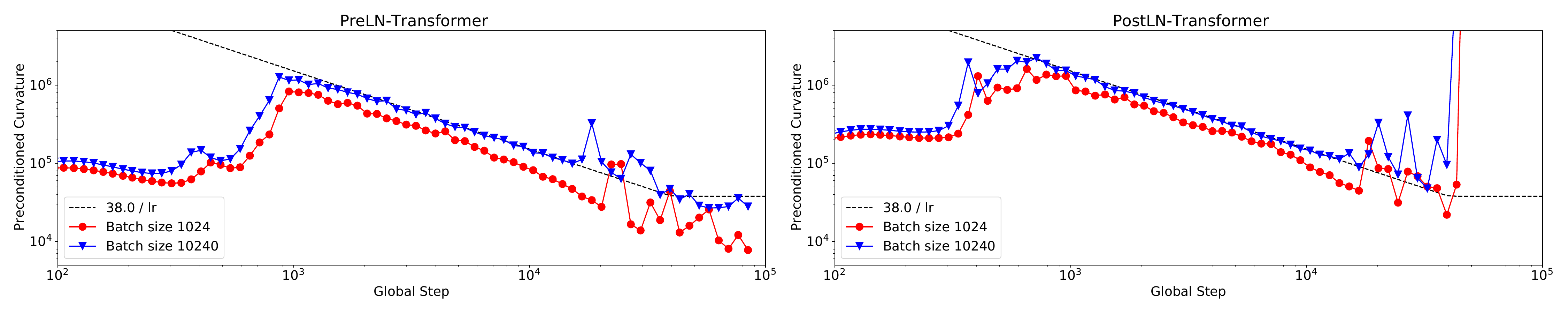}
    \caption{{\bf Learning rate warmup gradually reduces the preconditioned sharpness during optimization.} {\em Left:} The evolution of the preconditioned sharpness for a 6L-6L encoder-decoder Pre-LN transformer trained on the WMT En=>De task at $\eta = .001$ with a linear warmup period of 40000 steps. {\em Right:}  Same as left, but with a PostLN Transformer. In both cases the preconditioned curvature closely tracks the $38 / \eta$ bound during warmup, however there is a noticeable gap at the smaller batch size. The PostLN Transformer training fails late in the warmmup period. } 
    \label{fig:wmt_bs_comp}
\end{figure}

\section{A closer look at the AEoS}
\label{sec:explain}

In this section, we take a closer look at Adam's behavior at the AEoS.
We will see that this behavior sometimes differs substantially from that of non-adaptive optimizers.
In particular, whereas non-adaptive optimizers at the EoS are blocked from entering high-curvature regions of parameter space, adaptive gradient methods at the AEoS can and do enter high-curvature regions via their ability to adapt the preconditioner.

\begin{figure}[h!]
    \includegraphics[width=14cm]{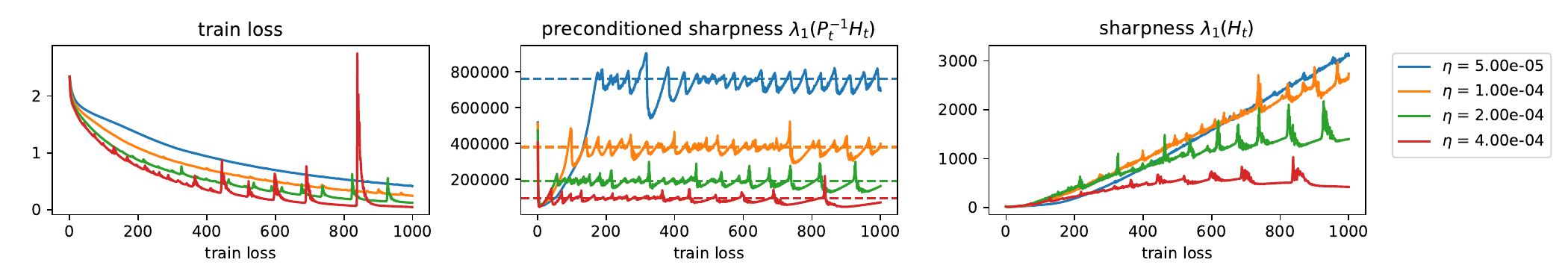}
    \caption{\textbf{The sharpness continues to rise even as Adam trains at the AEoS.}  We train a fully-connected network on CIFAR-10 using full-batch Adam with $\beta_1 = 0.9$, $\beta_2 = 0.99$, and four learning rates.   Observe that while the \emph{preconditioned} sharpness flatlines, the \emph{raw} sharpness continues to rise.}
    \label{fig:hessian-rises}
\end{figure}

In Figure \ref{fig:hessian-rises}, we train a fully-connected network on CIFAR-10 using full-batch Adam at four learning rates.
Observe that even though the preconditioned sharpness $\lambda_1(\mathbf{P}^{-1}_t \mathbf{H}_t)$ flatlines, the raw sharpness $\lambda_1(\mathbf{H}_t)$ continues to rise while the network is training at the AEoS.
This behavior stands in stark contrast to the behavior of frozen Adam at the EoS in Figure \ref{fig:fixed-preconditioner}; there, the raw sharpness ceased to appreciably increase after training had entered the EoS.

Why might this be the case?
A crucial distinction between adaptive and non-adaptive optimizers is that the latter are necessarily repelled from high-curvature regions of parameter space.
For example, $\textsc{EmaHB}(\eta, \beta_1)$ is repelled from the set $\{\mathbf{x}: \lambda_1(\mathbf{H}(\mathbf{x})) > \frac{2+2\beta_1}{\eta(1 - \beta_1)} \}$, and $\textsc{PreconEmaHB}(\eta, \beta_1, \mathbf{P})$ is repelled from the set  $\{\mathbf{x}: \lambda_1( \mathbf{P}^{-1} \mathbf{H}(\mathbf{x})) > \frac{2+2\beta_1}{\eta(1 - \beta_1)} \}$.
By contrast, while Adam's trajectory $\{(\mathbf{P}_t, \mathbf{x}_t)\}_t$ must approximately respect the constraint $\lambda_1(\mathbf{P}_t^{-1} \mathbf{H}(\mathbf{x}_t)) \le \frac{2 + 2 \beta_1}{\eta (1 - \beta_1)}$, Adam is allowed to change the preconditioner, and hence can train stably in regions of arbitrarily high curvature.

To gain more insight into the behavior of Adam at the AEoS, in Figure \ref{fig:zoomin} we ``zoom in'' on 200 steps of Adam training, for both a small learning rate (5e-5, the blue line in Figure \ref{fig:hessian-rises}) and a large learning rate (4e-4, the red line in Figure \ref{fig:hessian-rises}). 
In the top row, which corresponds to the small learning rate, we observe the following cycle.
Whenever the preconditioned sharpness is below the stability threshold, both the sharpness and preconditioned sharpness rise.
Eventually, the preconditioned sharpness crosses the stability threshold --- an event which we mark with a vertical dotted red line.
This causes Adam to oscillate explosively along a certain direction, as discussed in \S \ref{sec:background} (see Appendix E for more evidence).
The large gradients associated with this explosive growth increase the second-moment accumulator $\boldsymbol{\nu}_t$ and hence Adam's preconditioner $\mathbf{P}_t$.
This increase in $\mathbf{P}_t$ in turn causes the preconditioned sharpness $\lambda_1(\mathbf{P}_t^{-1} \mathbf{H}_t)$ to decrease until it drops below the stability threshold, at which point the large gradients go away.
The cycle then repeats itself. 
By this mechanism, the preconditioned sharpness equilibrates at the stability threshold, but the raw sharpness continues to grow.
In other words, instability generates large gradients, which in turn increase the preconditioner, allowing Adam to continue advancing into regions of higher sharpness.

\begin{figure}[h!]
    \includegraphics[width=14cm]{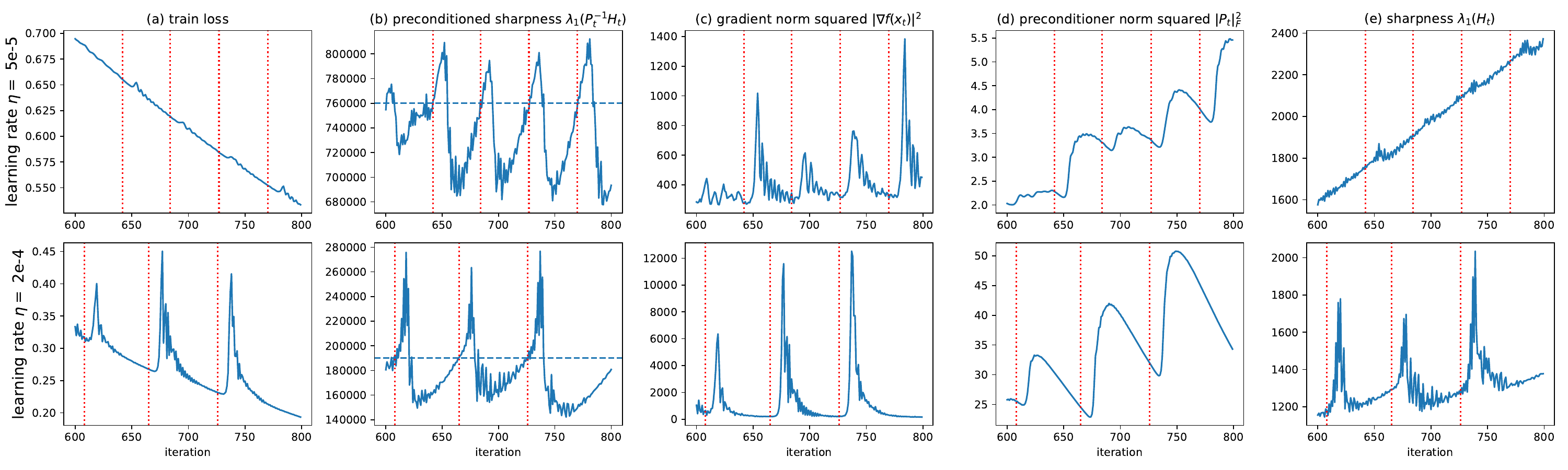}
    \caption{\textbf{Zoom in on 200 steps of training.}  For two learning rates from Figure 9 (top row = 5e-5, bottom row = 2e-4), we zoom in on 200 steps of training, plotting five important quantities.  The red vertical lines mark the moments at which the preconditioned sharpness crosses the stability threshold. }
    \label{fig:zoomin}
\end{figure}

The bottom row of Figure \ref{fig:zoomin} depicts 200 steps of Adam at a larger learning rate.
At this larger learning rate, Adam behaves somewhat differently.
As above, we see that whenever the preconditioned sharpness is below the threshold, both the sharpness and preconditioned sharpness rise.
And, as above, we see that after the preconditioned sharpness crosses the threshold (red vertical lines), the gradient norm begins to increase explosively.
However at the larger learning rate, this instability is ``resolved'' differently than at the small learning rate.
In particular, for this large learning rate, we observe that the instability is resolved when both the sharpness and the preconditioned sharpness drop, indicating that the optimizer has escaped to a region where the curvature in many different directions (not just the maximum eigenvector of the preconditioned Hessian) is lower.
This behavior is similar to the ``catapult'' effect studied in \citep{lewkowycz2020large} for non-adaptive optimizers.

\paragraph{The role of Adam's hyperparameters}

The preceding discussion suggests that Adam's behavior at the AEoS --- and in particular, the extent to which the stability-based constraint on $\lambda_1(\mathbf{P}^{-1}_t \mathbf{H}_t)$ is maintained by adapting $\mathbf{P}$ vs. by constraining $\mathbf{H}$ --- may depend on Adam's hyperparameters.
To test this hypothesis, in Figure \ref{fig:hyperparameters}(a), we use full-batch Adam at a range of learning rates to train a Wide Resnet on CIFAR-100 until reaching a training loss value of 0.1.
We plot the sharpness $\lambda_1(\mathbf{H}(\mathbf{x}^*))$ at the solution $\mathbf{x}^*$, as a function of the learning rate.
Observe that full-batch Adam at small learning rates finds sharper solutions.
In Figure \ref{fig:hyperparameters}(b), we confirm that this trend extends to minibatch Adam at the large batch size of 4096.
Meanwhile, in Figure \ref{fig:hyperparameters}(c), we sweep full-batch Adam at a range of values of $\beta_2$.
We observe that when $\beta_2$ is small, Adam tends to find sharper solutions.
Figure \ref{fig:hyperparameters}(d) demonstrates that this trend extends to minibatch Adam at batch size 4096.
Incidentally, we show in Appendix F that the models trained with higher $\eta$ or higher $\beta_2$ generalize better.

\begin{figure}[h!]
    \includegraphics[width=14cm]{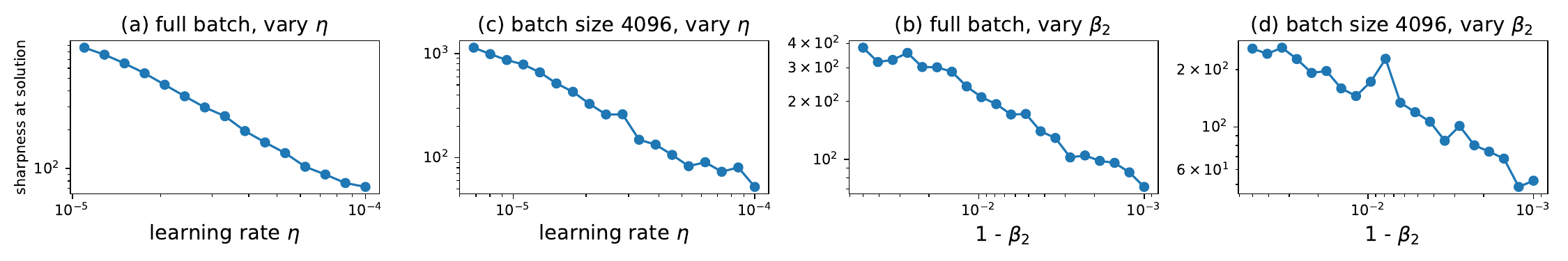}
    \caption{\textbf{Adam finds sharper solutions when $\eta$ or $\beta_2$ is small.}  We train a WRN on CIFAR-100 until reaching train loss 0.1, and we record the sharpness $\lambda_1(\mathbf{H}(\mathbf{x}^*))$ at the solution $\mathbf{x}^*$.
    In (a, b) we sweep the learning rate $\eta$, while in (c, d) we sweep $\beta_2$.
    In (a, c) we train in full-batch mode, while in (b, d) we use the large batch size 4096.}
    \label{fig:hyperparameters}
\end{figure}

One speculative hypothesis is that Adam's ability to resolve instabilities via preconditioner adaptation is contingent on how fast the preconditioner adapts (smaller $\beta_2$ = faster adaptation) relative to the speed at which the optimizer is moving (larger $\eta$ = faster movement).

\section{Adam finds sharper solutions than momentum}

Adam and other adaptive gradient methods are able to train stably in arbitrarily sharp regions of parameter space, whereas momentum gradient descent is repelled from regions of parameter space where the sharpness exceeds the algorithm's stability threshold.
We now demonstrate that as a consequence, Adam sometimes converges to far sharper solutions than momentum GD.
In Figure \ref{fig:adam-sharper-momentum}, we train a WRN on CIFAR-100 (left two panes) and a fully-connected network on CIFAR-10 (right two panes) using both Adam and momentum GD.
It is impossible to make a blanket claim that Adam finds sharper solutions than momentum GD, since momentum at a small learning rate finds sharper solutions than Adam at a large one; however, we observe that \emph{for a given training speed}, Adam converges to a sharper solution.

\begin{figure}[h!]
    \includegraphics[width=14cm]{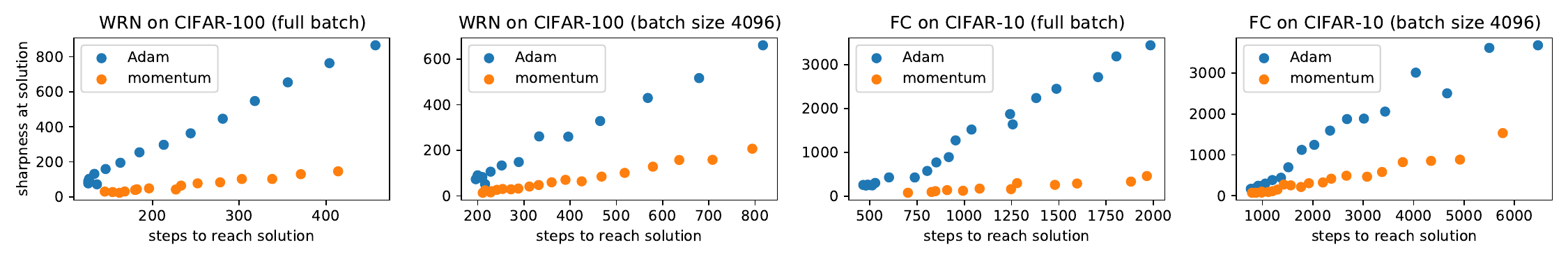}
    \caption{\textbf{At comparable training speeds, Adam finds sharper solutions than momentum gradient descent.}  We train networks using both Adam (blue) and momentum (orange) at a range of learning rates.  For each algorithm and learning rate, we plot the maximum Hessian eigenvalue at the solution as a function of the number of steps to reach the solution.  For both algorithms, smaller learning rates (= longer training speeds) find sharper solutions; but in general, Adam finds a much sharper solution than momentum GD for the same training time.}
    \label{fig:adam-sharper-momentum}
\end{figure}

\section{Conclusion}

In a recent year, Adam \citep{kingma2014adam} was the most highly cited scientific paper across all fields of science \citep{crew_2020}.
Yet, very little is known concretely about the training dynamics of Adam or other adaptive gradient methods in deep learning.
Since the training dynamics affect both the optimization and generalization performance of neural networks in mysterious ways, any basic knowledge about the training dynamics could pay dividends for future research.
This paper has demonstrated that in the special setting of full-batch training, there is an equilibrium rule that is typically maintained throughout training.
We have provided evidence that a similar equilibrium may exist in the more practical minibatch setting, and we hope that an analogous equilibrium rule will eventually be found.

\section{Acknowledgements}
The authors are

\clearpage
\bibliographystyle{plain} 

\appendix

\newpage

\section{Mathematical background}

\subsection{Stability of gradient descent algorithms on quadratics}

We now review the stability properties of gradient descent algorithms on quadratic objective functions.
We first refer the reader to Appendix A of \citep{cohen2021gradient}; that appendix derives the stability thresholds for $\textsc{GD}(\eta)$, $\textsc{StandardHB}(\eta, \beta_1)$, and  $\textsc{StandardNesterov}(\eta, \beta_1)$.
In this appendix, we derive the stability thresholds for $\textsc{EmaHB}(\eta, \beta_1)$, and  $\textsc{EmaNesterov}(\eta, \beta_1)$.
These derivations are minor variants of the corresponding derivations in \citep{cohen2021gradient}, since the EMA-style momentum algorithms yield the same recurrence relation as the standard-style momentum algorithms, except with a learning rate scaled by $(1 - \beta_1)$.

\begin{table}[h]
\begin{center}
\begin{tabular}{ |c|c|c|c| } 
 \hline
 \textbf{Algorithm} & \textbf{Update rule} & \textbf{Stability threshold} & \textbf{Reference for threshold} \\ 
 \hline
 $\textsc{GD}(\eta)$ & 
 $\mathbf{x}_{t+1} = \mathbf{x}_t - \eta \mathbf{g}_{t+1} $
 & $2/\eta$ & \makecell{common knowledge, \\ reproduced in \citep{cohen2021gradient}\\ for completeness} \\ 
 \hline
  $\textsc{StandardHB}(\eta, \beta_1)$ & 
 \makecell{ $\mathbf{m}_{t+1} = \beta_1 \mathbf{m}_t + \mathbf{g}_{t+1}$ \\ $\mathbf{x}_{t+1} = \mathbf{x}_t - \eta \mathbf{m}_{t+1}$}
 & $\frac{2 + 2 \beta_1}{\eta}$ & \makecell{\citep{goh2017why}, reproduced in \citep{cohen2021gradient} \\ for completeness} \\ 
 \hline
  $\textsc{EmaHB}(\eta, \beta_1)$ & 
 \makecell{$\mathbf{m}_{t+1} = \beta_1 \mathbf{m}_t + (1 - \beta_1) \mathbf{g}_{t+1}$ \\ $\mathbf{x}_{t+1} = \mathbf{x}_t - \eta \mathbf{m}_{t+1}$}
 & $\frac{2 + 2 \beta_1}{\eta (1 - \beta_1)}$ & this paper \\ 
 \hline
  $\textsc{StandardNesterov}(\eta, \beta_1)$ & 
 \makecell{$\mathbf{m}_{t+1} = \beta_1 \mathbf{m}_t + \mathbf{g}_{t+1}$ \\
 $\hat{\mathbf{m}}_{t+1} = \beta_1 \mathbf{m}_{t+1} + \mathbf{g}_{t+1}$ \\
 $\mathbf{x}_{t+1} = \mathbf{x}_t - \eta \hat{\mathbf{m}}_{t+1} $}
 & $\frac{2 + 2 \beta_1}{\eta (1 + 2\beta_1)}$ &  \citep{cohen2021gradient} \\
 \hline
  $\textsc{EmaNesterov}(\eta, \beta_1)$ & 
 \makecell{$\mathbf{m}_{t+1} = \beta_1 \mathbf{m}_t + (1 - \beta_1) \mathbf{g}_{t+1}$ \\
 $\hat{\mathbf{m}}_{t+1} = \beta_1 \mathbf{m}_{t+1} + (1 - \beta_1) \mathbf{g}_{t+1}$ \\
 $\mathbf{x}_{t+1} = \mathbf{x}_t - \eta \hat{\mathbf{m}}_{t+1} $}
 & $\frac{2 + 2 \beta_1}{\eta (1 - \beta_1) (1 + 2\beta_1)}$ & this paper \\
 \hline
\end{tabular}
\end{center}
\caption{Gradient descent algorithms and their stability thresholds.}
\label{table:algorithms}
\end{table}

\begin{lemma}
    Consider running $\textsc{EmaNesterov}(\eta, \beta_1)$ on a quadratic objective $f(\mathbf{x}) = \frac{1}{2} \mathbf{x}^T \mathbf{A} \mathbf{x} + \mathbf{b}^T \mathbf{x} + c $ starting from any initialization.  Let $(\mathbf{q}, a)$ be an eigenvector/eigenvalue pair of $\mathbf{A}$.  If $a > \frac{1}{\eta} \left( \frac{2 + 2 \beta_1}{(1 - \beta_1)(1 + 2 \beta_1)} \right)$, then the sequence $\{ \mathbf{q}^T \mathbf{x} \}$ will diverge.
\end{lemma}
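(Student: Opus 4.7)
The plan is to exploit the observation (implicit in the remark preceding the lemma) that EMA-style momentum coincides with standard-style momentum up to a rescaling. Concretely, I would check by induction that if one initializes $\mathbf{m}_0 = (1-\beta_1) \mathbf{m}^{\text{std}}_0$, then $\mathbf{m}_t = (1-\beta_1) \mathbf{m}^{\text{std}}_t$ for all $t$, and consequently the look-ahead momentum $\hat{\mathbf{m}}_t$ equals $(1-\beta_1)\hat{\mathbf{m}}^{\text{std}}_t$. Thus the $\mathbf{x}$-update of $\textsc{EmaNesterov}(\eta,\beta_1)$ coincides with that of $\textsc{StandardNesterov}(\tilde\eta,\beta_1)$ with $\tilde\eta := \eta(1-\beta_1)$. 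Substituting this $\tilde\eta$ into the Standard-Nesterov threshold $\frac{2+2\beta_1}{\tilde\eta(1+2\beta_1)}$ from \citep{cohen2021gradient} produces the claimed threshold $\frac{2+2\beta_1}{\eta(1-\beta_1)(1+2\beta_1)}$.

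For a fully self-contained derivation (the route I would actually take in an appendix so as not to rely on a bookkeeping match with a cited lemma), I would instead project onto $\mathbf{q}$ directly. Setting $y_t := \mathbf{q}^T \mathbf{x}_t$, $m_t := \mathbf{q}^T \mathbf{m}_t$, $d := \mathbf{q}^T \mathbf{b}$, and using $\mathbf{q}^T \nabla f(\mathbf{x}_t) = a y_t + d$, the updates collapse (after eliminating $\hat{\mathbf{m}}_{t+1}$) to the affine 2D recurrence
\begin{align*}
y_{t+1} &= y_t - \eta\bigl[\beta_1^2 m_t + (1-\beta_1^2)(a y_t + d)\bigr], \\
m_{t+1} &= \beta_1 m_t + (1-\beta_1)(a y_t + d).
\end{align*}
A short calculation shows that the characteristic polynomial of the homogeneous part is
\begin{equation*}
P(\lambda) = \lambda^2 - \bigl[1+\beta_1 - (1-\beta_1^2)\eta a\bigr]\lambda + \beta_1\bigl[1 - (1-\beta_1)\eta a\bigr].
\end{equation*}
I would then invoke the Jury/Schur criterion: evaluating at $\lambda = -1$ and factoring $(1-\beta_1)$ out of the $\eta a$ terms gives $P(-1) = 2(1+\beta_1) - \eta a (1-\beta_1)(1+2\beta_1)$, which becomes negative precisely when $a$ exceeds the stated threshold. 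Since $P(\lambda)\to +\infty$ as $\lambda \to -\infty$, the polynomial then has a real root strictly less than $-1$, so the transition matrix has an eigenvalue of modulus greater than one.

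The final step is to conclude that $\mathbf{q}^T \mathbf{x}_t$ in fact diverges. The unstable mode forces geometric growth of the component of $(y_t - y^*, m_t - m^*)$ along the corresponding eigenvector, where $(y^*, m^*)$ is the unique fixed point of the affine system; hence $|y_t|\to\infty$ except on the codimension-one stable manifold of initializations, which I would either exclude as a non-generic exception or note as a technicality in the statement ``any initialization.'' The main obstacle is algebraic bookkeeping: in the slick route, carefully verifying that the rescaling survives Nesterov's two-stage look-ahead; in the direct route, executing the cancellation that produces the clean factor $(1-\beta_1)(1+2\beta_1)$ in $P(-1)$. Everything else reduces to routine linear algebra once the 2D recurrence is set up.
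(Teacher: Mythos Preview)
Your slick route is essentially the paper's proof: the paper rewrites $\textsc{EmaNesterov}$ as a second-order recurrence in $\mathbf{x}_t$ alone, observes that it matches the $\textsc{StandardNesterov}$ recurrence from \citep{cohen2021gradient} with $\eta$ replaced by $(1-\beta_1)\eta$, and then cites that theorem's threshold; you reach the same reduction by tracking $\mathbf{m}_t = (1-\beta_1)\mathbf{m}_t^{\text{std}}$ inductively, which is equivalent bookkeeping. Your self-contained route---projecting to the 2D system in $(y_t,m_t)$ and evaluating the characteristic polynomial at $\lambda=-1$---is a genuinely different derivation that trades the external citation for a short direct calculation (and the factorization you flag as the main obstacle, $(1-\beta_1^2)+\beta_1(1-\beta_1)=(1-\beta_1)(1+2\beta_1)$, does go through cleanly). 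As a bonus, that route makes explicit the measure-zero stable-manifold caveat that the paper's phrasing ``any initialization'' glosses over.
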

\begin{proof}
    First, we re-write $\textsc{EmaNesterov}(\eta, \beta_1)$ as a recursion in $\mathbf{x}_t$ alone:
    $$ \mathbf{x}_{t+1} = (1 + \beta_1) \mathbf{x}_t - \beta_1 \mathbf{x}_{t-1} - \eta(1 - \beta_1)(1 + \beta_1) \nabla f(\mathbf{x}_t) + \eta (1 - \beta_1) \nabla f(\mathbf{x}_{t-1}) $$
    For the quadratic objective, we have $\nabla f(\mathbf{x}_t) = \mathbf{A} \mathbf{x}_t + \mathbf{b}$, so this update becomes:
    $$ \mathbf{x}_{t+1} = (1 + \beta_1)(\mathbf{I} - \eta \mathbf{A}) \mathbf{x}_t - \beta_1 (\mathbf{I} - \eta (1 - \beta_1) \mathbf{A}) \mathbf{x}_{t-1} - \eta (1 - \beta_1) \mathbf{b}.  $$
    This is exactly the recurrence in Theorem 1 of \citep{cohen2021gradient}, except with $(1 - \beta_1) \eta$ in place of $\eta$.  Thus, by the same logic as in Theorem 1 of \citep{cohen2021gradient}, $\mathbf{q}^T \mathbf{x}_t$ diverges if $a > \frac{1}{\eta} \left( \frac{2 + 2 \beta_1}{(1 - \beta_1)(1 + 2 \beta_1)}\right)$.
\end{proof}

\begin{lemma}
    Consider running $\textsc{EmaHB}(\eta, \beta_1)$ on a quadratic objective $f(\mathbf{x}) = \frac{1}{2} \mathbf{x}^T \mathbf{A} \mathbf{x} + \mathbf{b}^T \mathbf{x} + c $ starting from any initialization.  Let $(\mathbf{q}, a)$ be an eigenvector/eigenvalue pair of $\mathbf{A}$.  If $a > \frac{1}{\eta} \left( \frac{2 + 2 \beta_1}{1 - \beta_1} \right)$, then the sequence $\{ \mathbf{q}^T \mathbf{x} \}$ will diverge.
\end{lemma}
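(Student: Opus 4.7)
My plan is to mimic the proof of the preceding EmaNesterov lemma: reduce $\textsc{EmaHB}(\eta,\beta_1)$ to a two-step linear recurrence in $\mathbf{x}_t$ alone, and then invoke the corresponding result for $\textsc{StandardHB}$ from Theorem 1 of \citep{cohen2021gradient} via a learning-rate reparameterization.

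Concretely, first I would eliminate the momentum variable. From $\mathbf{x}_{t+1}=\mathbf{x}_t-\eta\mathbf{m}_{t+1}$ we get $\mathbf{m}_{t+1}=(\mathbf{x}_t-\mathbf{x}_{t+1})/\eta$, and similarly $\mathbf{m}_t=(\mathbf{x}_{t-1}-\mathbf{x}_t)/\eta$. Substituting into the EMA momentum update $\mathbf{m}_{t+1}=\beta_1\mathbf{m}_t+(1-\beta_1)\mathbf{g}_{t+1}$ with $\mathbf{g}_{t+1}=\nabla f(\mathbf{x}_t)=\mathbf{A}\mathbf{x}_t+\mathbf{b}$ yields
\[
\mathbf{x}_{t+1}=(1+\beta_1)\mathbf{x}_t-\beta_1\mathbf{x}_{t-1}-\eta(1-\beta_1)(\mathbf{A}\mathbf{x}_t+\mathbf{b}).
\]
This is exactly the two-step recurrence that appears in Theorem 1 of \citep{cohen2021gradient} for $\textsc{StandardHB}$, except with the effective learning rate $\tilde{\eta}:=(1-\beta_1)\eta$ in place of $\eta$.

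Next I would project onto the eigenvector $\mathbf{q}$: let $u_t:=\mathbf{q}^T\mathbf{x}_t$ and $b_q:=\mathbf{q}^T\mathbf{b}$. Using $\mathbf{q}^T\mathbf{A}=a\mathbf{q}^T$, the recurrence decouples to the scalar form
\[
u_{t+1}=\bigl(1+\beta_1-\tilde{\eta}a\bigr)u_t-\beta_1 u_{t-1}-\tilde{\eta}b_q,
\]
whose stability is governed by the roots of the characteristic polynomial $z^2-(1+\beta_1-\tilde{\eta}a)z+\beta_1=0$. By the same root-magnitude analysis used in \citep{cohen2021gradient} for the standard heavy ball, $\{u_t\}$ diverges precisely when $a>\frac{2+2\beta_1}{\tilde{\eta}}=\frac{2+2\beta_1}{\eta(1-\beta_1)}$, which is the claimed threshold.

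There is no genuine obstacle here — every step is a direct translation of the $\textsc{StandardHB}$ argument via the substitution $\eta\mapsto(1-\beta_1)\eta$. The only thing to double-check is that the reparameterization is applied to every occurrence of $\eta$ in the recurrence (there is no vestigial $\eta$ outside the $(1-\beta_1)\eta$ factor), so that the characteristic polynomial is literally the one already analyzed. Given that, the conclusion follows immediately from the cited theorem.
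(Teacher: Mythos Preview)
Your proposal is correct and essentially identical to the paper's proof: both eliminate the momentum variable to obtain the two-step recurrence $\mathbf{x}_{t+1}=(1+\beta_1)\mathbf{x}_t-\beta_1\mathbf{x}_{t-1}-\eta(1-\beta_1)\nabla f(\mathbf{x}_t)$, recognize it as the $\textsc{StandardHB}$ recurrence with learning rate $(1-\beta_1)\eta$, and invoke the known threshold result. The only quibble is that in \citep{cohen2021gradient} the heavy-ball result is Theorem~2, not Theorem~1 (Theorem~1 there covers Nesterov), but this is a citation slip and does not affect the argument.
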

\begin{proof}
    First, we re-write $\textsc{EmaHB}(\eta, \beta_1)$ as a recursion in $\mathbf{x}_t$ alone:
    $$ \mathbf{x}_{t+1} = (1 + \beta_1) \mathbf{x}_t - \beta_1 \mathbf{x}_{t-1} - \eta (1 - \beta_1) \nabla f(\mathbf{x}_t). $$
    For the quadratic objective, we have $\nabla f(\mathbf{x}_t) = \mathbf{A} \mathbf{x}_t + \mathbf{b}$, so this update becomes:
    $$ \mathbf{x}_{t+1} = \left[ (1 + \beta_1) \mathbf{I} - \eta (1 - \beta_1) \mathbf{A} \right] \mathbf{x}_t - \beta_1 \mathbf{x}_{t-1} - \eta (1 - \beta_1) \mathbf{b}. $$
    This is exactly the recurrence in Theorem 2 of \citep{cohen2021gradient}, except with $(1 - \beta_1) \eta$ in place of $\eta$.  Thus, by the same logic as in Theorem 2 of \citep{cohen2021gradient}, $\mathbf{q}^T \mathbf{x}_t$ diverges if $a > \frac{1}{\eta} \left( \frac{2 + 2 \beta_1}{1 - \beta_1}\right)$.
\end{proof}

\subsection{Stability of preconditioned gradient descent}

As discussed in \S \ref{sec:background}, gradient descent can be preconditioned using a preconditioner $\mathbf{P}$.
For example, the preconditioned version of $\textsc{EmaHB}(\eta, \beta_1)$, which we denote $\textsc{PreconEmaHB}(\eta, \beta_1, \mathbf{P})$ is:
\begin{align*}
 \mathbf{m}_{t+1} &= \beta_1 \mathbf{m}_t + (1 - \beta_1) \mathbf{g}_{t+1}, \\
 \mathbf{x}_{t+1} &= \mathbf{x}_t - \eta \, \mathbf{P}^{-1} \, \mathbf{m}_{t+1}.
\end{align*}

On quadratic objective functions, the stability behavior of preconditioned gradient descent algorithms differs slightly from that of their non-preconditioned counterparts.
Whereas non-preconditioned gradient descent algorithms become unstable when the maximum eigenvalue of the Hessian $\mathbf{H}$ exceeds a certain stability threshold, preconditioned gradient descent algorithms become unstable when the maximum eigenvalue of the \emph{preconditioned Hessian} $\mathbf{P}^{-1/2} \mathbf{H} \mathbf{P}^{-1/2}$ exceeds the threshold of their non-precondintioned counterpart.

The simplest way to see this is to note that preconditioned gradient descent algorithms are isomorphic to their non-preconditioned counterparts up to a certain reparameterization of the training objective.
In particular, preconditioned gradient descent on an objective $f$ using preconditioner $\mathbf{P}$ is isomorphic to non-preconditioned gradient descent on the reparameterized objective $\hat{f}(\mathbf{x}) = f(\mathbf{P}^{-1/2} \mathbf{x})$, in the sense that the iterates of the two algorithms are related by a linear map.
Below, we make this rigorous for $\textsc{EmaHB}(\eta, \beta_1)$ and $\textsc{PreconEmaHB}(\eta, \beta_1, \mathbf{P})$, but note that one can carry out the same argument for the other momentum algorithms and their preconditioned counterparts:

\begin{prop}
Let $\{(\mathbf{x}_t, \mathbf{m}_t)\}$ denote the iterates of $\textsc{PreconEmaHB}(\eta, \beta_1, \mathbf{P})$ on the objective function $f(\mathbf{x})$, and let $\{(\tilde{\mathbf{x}}_t, \tilde{\mathbf{m}}_t) \}$ denote the iterates of $\textsc{EmaHB}(\eta, \beta_1)$ on the reparameterized objective function $\tilde{f}(\mathbf{x}) = f(\mathbf{P}^{-1/2} \mathbf{x})$ starting from the initialization $(\tilde{\mathbf{x}}_0, \tilde{\mathbf{m}_0}) = (\mathbf{P}^{1/2} \mathbf{x}_0, \mathbf{P}^{-1/2} \mathbf{m}_0)$.
Then we have the equivalence $\tilde{\mathbf{x}}_t = \mathbf{P}^{1/2} \mathbf{x}_t$ and $\tilde{\mathbf{m}}_{t} = \mathbf{P}^{-1/2} \mathbf{m}_t$ for all steps $t$.
\label{prop:equivalence}
\end{prop}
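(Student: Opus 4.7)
The plan is to prove this by straightforward induction on $t$, with the inductive step amounting to two short algebraic manipulations: one using the chain rule to relate $\nabla \tilde{f}$ to $\nabla f$, and one using the commutation of powers of $\mathbf{P}$ (which is symmetric, so $\mathbf{P}^{1/2}$ is well-defined and self-adjoint).

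First I would set up the base case, which is immediate from the hypothesized initialization $\tilde{\mathbf{x}}_0 = \mathbf{P}^{1/2} \mathbf{x}_0$ and $\tilde{\mathbf{m}}_0 = \mathbf{P}^{-1/2} \mathbf{m}_0$. Then, assuming the invariants $\tilde{\mathbf{x}}_t = \mathbf{P}^{1/2} \mathbf{x}_t$ and $\tilde{\mathbf{m}}_t = \mathbf{P}^{-1/2} \mathbf{m}_t$ hold at step $t$, I would first observe that by the chain rule,
\[
\nabla \tilde{f}(\tilde{\mathbf{x}}_t) \;=\; \mathbf{P}^{-1/2}\,\nabla f(\mathbf{P}^{-1/2}\tilde{\mathbf{x}}_t) \;=\; \mathbf{P}^{-1/2}\,\nabla f(\mathbf{x}_t) \;=\; \mathbf{P}^{-1/2}\,\mathbf{g}_{t+1},
\]
so that $\tilde{\mathbf{g}}_{t+1} = \mathbf{P}^{-1/2}\mathbf{g}_{t+1}$. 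Substituting this together with the inductive hypothesis into the EMA momentum update for $\tilde{\mathbf{m}}_{t+1}$ and factoring out $\mathbf{P}^{-1/2}$ yields $\tilde{\mathbf{m}}_{t+1} = \mathbf{P}^{-1/2}\mathbf{m}_{t+1}$, which preserves the momentum invariant.

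Next I would plug this into the parameter update for $\tilde{\mathbf{x}}_{t+1}$ and use $\mathbf{P}^{1/2} \cdot \mathbf{P}^{-1} = \mathbf{P}^{-1/2}$ (valid because $\mathbf{P}$ is symmetric positive definite, so all its powers commute). Explicitly,
\[
\tilde{\mathbf{x}}_{t+1} \;=\; \tilde{\mathbf{x}}_t - \eta\,\tilde{\mathbf{m}}_{t+1} \;=\; \mathbf{P}^{1/2}\mathbf{x}_t - \eta\,\mathbf{P}^{-1/2}\mathbf{m}_{t+1} \;=\; \mathbf{P}^{1/2}\bigl(\mathbf{x}_t - \eta\,\mathbf{P}^{-1}\mathbf{m}_{t+1}\bigr) \;=\; \mathbf{P}^{1/2}\mathbf{x}_{t+1},
\]
closing the induction.

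There is no genuine obstacle here; the proof is a routine verification. The only thing worth being careful about is the symmetry assumption on $\mathbf{P}$ (needed so that $\mathbf{P}^{1/2}$ makes sense and commutes with $\mathbf{P}^{-1}$). This is satisfied in all cases of interest in the paper, since the Adam/rmsprop preconditioners are diagonal with positive entries. The analogous statement for $\textsc{StandardHB}$, $\textsc{EmaNesterov}$, etc., goes through by an identical induction, which is why the paper asserts this without repeating the argument.
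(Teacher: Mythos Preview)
Your proposal is correct and follows essentially the same approach as the paper: both prove the claim by induction on $t$, using the chain rule to compute $\nabla \tilde{f}(\tilde{\mathbf{x}}_t) = \mathbf{P}^{-1/2}\nabla f(\mathbf{x}_t)$ and then verifying the momentum and iterate invariants by direct substitution. Your added remarks about the symmetry of $\mathbf{P}$ and the analogous argument for the other momentum variants are accurate and, if anything, slightly more explicit than the paper.
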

\begin{proof}
    The equivalence is true at initialization, by definition.  Thus, it remains to show that if the equivalence is true at step $t$, it is true at step $t+1$.
    
    The updates for $\{(\mathbf{x}_{t+1}, \mathbf{m}_{t+1})\}$ are:
    \begin{align*}
        \mathbf{m}_{t+1} &= \beta_1 \mathbf{m}_t + (1 - \beta_1) \nabla f(\mathbf{x}_t), \\
        \mathbf{x}_{t+1} &= \mathbf{x}_t - \eta \mathbf{P}^{-1} \mathbf{m}_{t+1}.
    \end{align*}
    Meanwhile, the updates for  $\{(\tilde{\mathbf{x}}_{t+1}, \tilde{\mathbf{m}}_{t+1})\}$ are:
    \begin{align*}
        \tilde{\mathbf{m}}_{t+1} &= \beta_1 \tilde{\mathbf{m}}_t + (1 - \beta_1) \nabla \tilde{f}(\tilde{\mathbf{x}}_t) \\
            &= \beta_1 \tilde{\mathbf{m}}_t + (1 - \beta_1) \mathbf{P}^{-1/2} \nabla f( \mathbf{P}^{-1/2} \tilde{\mathbf{x}}_t),  \\
        \tilde{\mathbf{x}}_{t+1} &= \tilde{\mathbf{x}}_t - \eta \tilde{\mathbf{m}}_{t+1}.
    \end{align*}
    To verify the equivalence $\tilde{\mathbf{m}}_{t+1} = \mathbf{P}^{-1/2} \mathbf{m}_{t+1}$, observe that
    \begin{align*}
        \mathbf{P}^{-1/2} \, \mathbf{m}_{t+1} &= \beta_1 \, \mathbf{P}^{-1/2} \mathbf{m}_t + (1 - \beta_1) \mathbf{P}^{-1/2} \nabla f(\mathbf{x}_t) \\
        &= \beta_1 \tilde{\mathbf{m}_t} + (1 - \beta_1) \mathbf{P}^{-1/2} \nabla f(\mathbf{P}^{-1/2} \tilde{\mathbf{x}}_t) \\
        &= \tilde{\mathbf{m}}_{t+1}.
    \end{align*}
    where the two replacements in the middle step are due to the inductive hypothesis holding for step $t$.
    
    To verify the equivalence $\tilde{\mathbf{x}}_{t+1} = \mathbf{P}^{1/2} \mathbf{x}_{t+1}$, observe that
    \begin{align*}
        \mathbf{P}^{1/2} \mathbf{x}_{t+1} &= \mathbf{P}^{1/2} \mathbf{x}_t - \eta \mathbf{P}^{-1/2} \mathbf{m}_{t+1} \\
        &= \tilde{\mathbf{x}}_t - \eta \tilde{\mathbf{m}_t} \\
        &= \tilde{\mathbf{x}}_{t+1}.
    \end{align*}
    where the two replacements in the middle step are due to the inductive hypothesis holding for step $t$.

\end{proof}

Due to Proposition \ref{prop:equivalence}, we know that $\textsc{PreconEmaHB}(\eta, \beta_1, \mathbf{P})$ on a quadratic function $$f(\mathbf{x}) = \frac{1}{2} \mathbf{x}^T \mathbf{A} \mathbf{x} + \mathbf{b}^T \mathbf{x} + c$$ has the same stability behavior as  $\textsc{EmaHB}(\eta, \beta_1)$ on the reparameterized quadratic $$\tilde{f}(\mathbf{x}) = \frac{1}{2} \mathbf{x}^T ( \mathbf{P}^{-1/2} \mathbf{A} \mathbf{P}^{-1/2}) \mathbf{x} + \mathbf{b}^T \mathbf{P}^{-1/2} \mathbf{x} + c. $$
Thus, $\textsc{PreconEmaHB}(\eta, \beta_1, \mathbf{P})$ diverges on $f$ whenever the maximum eigenvalue of $( \mathbf{P}^{-1/2} \mathbf{A} \mathbf{P}^{-1/2})$ exceeds the stability threshold of $\textsc{EmaHB}(\eta, \beta_1)$.
When this occurs, the component of $\mathbf{P}^{1/2} \mathbf{x}_t$ that is aligned with $\mathbf{v}_1$, the top eigenvector of the preconditioned Hessian, will oscillate with exponentially increasing magnitude.
Equivalently, we can say that the component of $\mathbf{x}_t$ that is aligned with ${(\mathbf{P}^{1/2})}^T \mathbf{v}_1$ will oscillate with exponentially increasing magnitude.

\clearpage

\section{Adaptive preconditioning}

In Figure \ref{fig:adaptive-preconditioning}, we optimize the objective function $f(x) = \frac{1}{2} x^2$ using rmsprop with learning rates that would be divergent if the preconditioner $\nu_t$ were held fixed at its initial value of $\nu_0 = 1.0$. Yet since the preconditioner is adaptive rather than fixed, rmsprop only oscillates explosively for a finite number of steps before the large gradients associated with this explosive movement cause $\nu_t$ to increase, which decreases rmsprop's effective learning rate and halts divergence.

\begin{figure}[h!]
    \begin{center}
    \includegraphics[width=8cm]{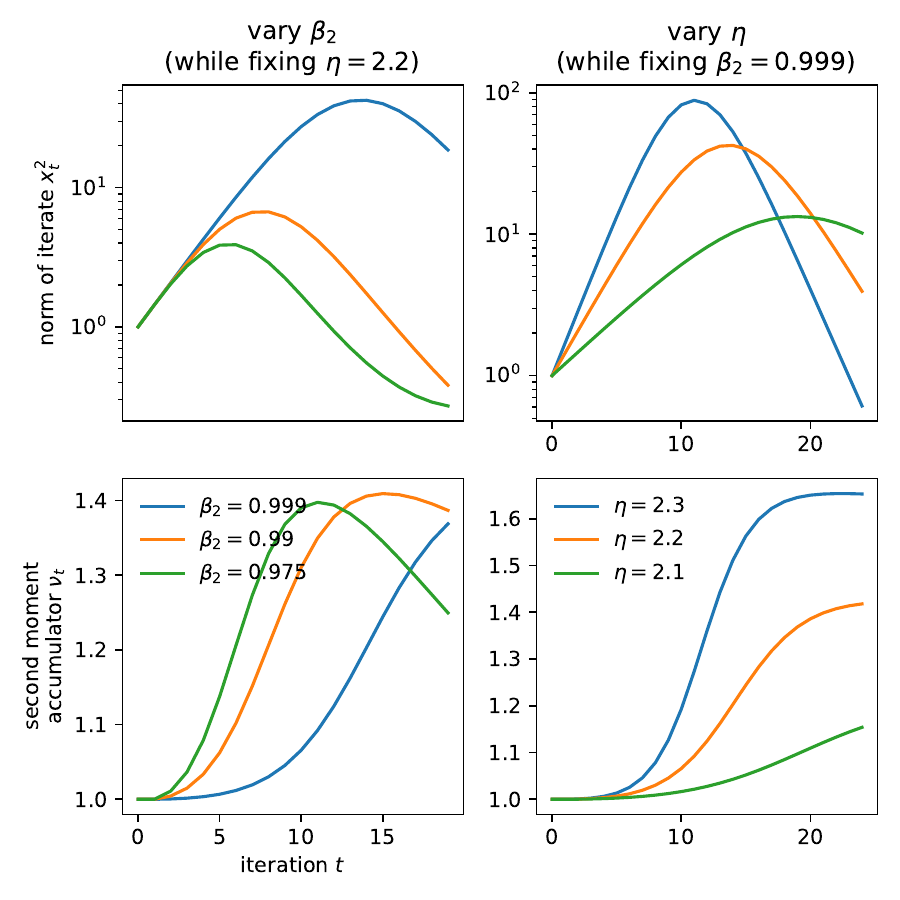}
    \caption{\textbf{Adaptive preconditioning often prevents rmsprop from diverging.}  We optimize the objective function $f(x) = \frac{1}{2} x^2$ using rmsprop.  In the left column, we hold $\eta$ fixed at $\eta=2.2$ and vary $\beta_2$; in the right column, we hold $\beta_2$ fixed at $\beta_2 = 0.999$ and vary $\eta$. Observe that rmsprop only oscillates explosively for a finite number of steps before the large gradients associated with this explosive movement cause $\nu_t$ to increase, which decreases rmsprop's effective learning rate and halts divergence. }
    \end{center}
    \label{fig:adaptive-preconditioning}
\end{figure}

\newpage

\section{Qualitative aspects of the AEoS}

Looking closely at Figure \ref{fig:algorithms}, one can discern that the preconditioned sharpness behaves in qualitatively different ways depending on the type of momentum.
For rmsprop and Adagrad, which do not employ any momentum, the preconditioned sharpness hovers above the stability threshold.
For Nadam, too, which employs Nesterov-style momentum, the preconditioned sharpness also hovers above the stability threshold.
In contrast, for the other algorithms, which employ heavy-ball style momentum, the preconditioned sharpness does not hover above the stability threshold --- instead, the optimizer is rapidly thrown into flatter regions whenever the preconditioned sharpness crosses the threshold; as a result, the preconditioned sharpness oscillates around the stability threshold instead of hovering above it.

We now (1) confirm that these behaviors also occur in the simpler setting of non-adaptive gradient descent, and (2) suggest a speculative explanation.

\paragraph{Non-adaptive gradient descent}
The pattern described above also holds for non-adaptive gradient descent.
For example, Figures 29, 95, and 96 in \citep{cohen2021gradient} depict the evolution of the sharpness as a network is trained using vanilla gradient descent, heavy ball momentum, and Nesterov momentum, respectively.
In both Figure 29 (vanilla) and Figure 96 (Nesterov), the sharpness hovers above the stability threshold, whereas in Figure 95 (heavy ball), the sharpness oscillates around the threshold.

We now reproduce this observation using the same network from Figure \ref{fig:algorithms}.
In Figure \ref{fig:sharpness-momentum}, we train the fully-connected network on CIFAR-10 using vanilla gradient descent $\textsc{GD}(\eta)$, heavy ball gradient descent $\textsc{StandardHB}(\eta, 0.9)$, and Nesterov gradient descent $\textsc{StandardNesterov}(\eta, 0.9)$, each at a range of learning rates.
Observe that for vanilla and Nesterov gradient descent, the sharpness hovers above the stability threshold (except at the smallest learning rate considered), whereas for heavy ball gradient descent, the sharpness oscillates around the stability threshold.

\begin{figure}[h!]
    \includegraphics[width=13cm]{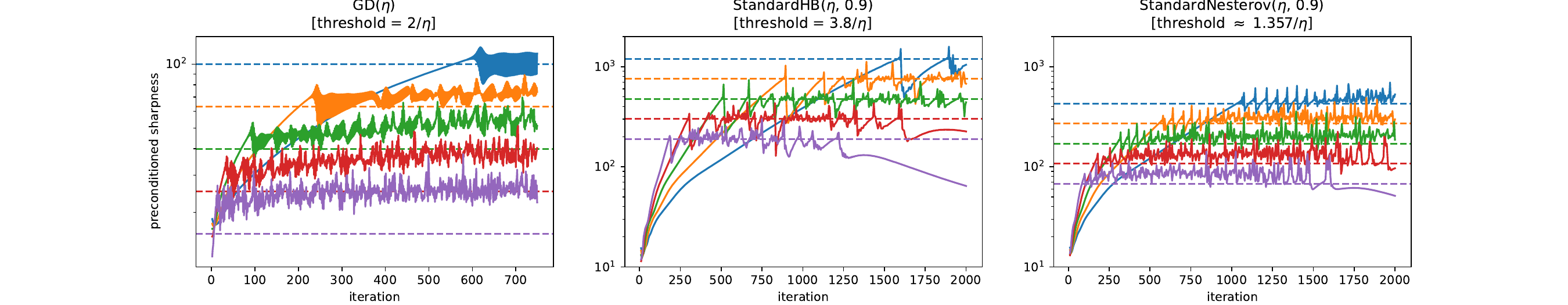}
    \caption{\textbf{Similar to Figure \ref{fig:algorithms}, the behavior of the sharpness during (non-adaptive) gradient descent depends on the type of momentum.}  We train a fully-connected network on CIFAR-10 using three non-adaptive algorithms: vanilla gradient descent ($\eta$ in logspace between 0.02 and 0.2), heavy ball momentum ($\beta_1 = 0.9$, and $\eta$ in logspace between 0.002 and 0.02), and Nesterov momentum ($\beta_1 = 0.9$, and $\eta$ in logspace between 0.002 and 0.02).  Observe that for vanilla GD and Nesterov, the sharpness hovers above the stability threshold, whereas for heavy ball, it oscillates around the stability threshold.  This pattern parallels that in Figure \ref{fig:algorithms} for Adam. }
    \label{fig:sharpness-momentum}
\end{figure}

\paragraph{A speculative explanation}
We now give a speculative explanation for the above observations.
One difference between heavy ball gradient descent, on the one hand, and Nesterov and vanilla gradient descent, on the other, is that heavy ball momentum can be much less robust to increases in the sharpness beyond the optimizer's stability threshold.
By way of background, recall from \citep{cohen2021gradient} that on quadratic functions, all three of these algorithms act independently along each Hessian eigenvector; in particular,
the component of the iterate that is aligned with each Hessian eigenvector evolves according to a linear recurrence relation whose coefficients are determined by the corresponding Hessian eigenvalue $\lambda$ \citep{cohen2021gradient}.
For example, the recurrence relation for $\textsc{GD}(0.2)$ is $x_{t+1} = (1 - 0.2 \, \lambda) x_t$, the recurrence relation for $\textsc{StandardHB}(0.02, 0.9)$ is $x_{t+1} = (1.9 - 0.02 \lambda) x_t - 0.9 x_{t-1}$, and the recurrence relation for $\textsc{StandardNesterov}(0.02, 0.9)$ is $x_{t+1} = 1.9(1 - 0.02 \lambda) x_t - 0.9 (1 - 0.02 \lambda) x_{t-1}$.
Asymptotically (i.e. over many optimizer steps), the growth rate of a linear recurrence relation is determined by the magnitude of the largest root of the characteristic polynomial.
(Note that for a recurrence of the form $x_{t+1} = a x_t + b x_{t-1}$, the characteristic polynomial is $x^2 - ax - b$.)
The stability threshold $\lambda^*$ of an optimizer is the value of $\lambda$ for which the magnitude of the largest-magnitude root is exactly 1.
The optimizer's robustness to barely-unstable eigenvalues ($\lambda$ just above $\lambda^*$) can be quantified by the asymptotic growth rate at at those eigenvalues.

For example, consider the three optimization algorithms $\textsc{GD}(0.2)$, $\textsc{StandardHB}(0.02, 0.9)$, and $\textsc{StandardNesterov}(0.02, 0.9)$.
These algorithms all share the same effective learning rate of 0.2, and they correspond to the purple lines in Figure \ref{fig:sharpness-momentum}.
In Figure \ref{fig:agf}, we visualize the asymptotic growth factor of these algorithms as a function of the input eigenvalue $\lambda$.
The stability threshold $\lambda^*$ is marked with a vertical dotted black line (of course, the asymptotic growth factor at $\lambda^*$ is always 1), and we plot the asymptotic growth factor for eigenvalues up to 1.1 times the stability threshold.

Crucially, observe that the asymptotic growth rate for $\textsc{StandardHB}(0.02, 0.9)$ at an eigenvalue 1.1 times its stability threshold is much higher than for $\textsc{GD}(0.02)$ and $\textsc{StandardNesterov}(0.02, 0.9)$.
This means that whenever $\textsc{StandardHB}$ finds itself in a region of parameter space where the sharpness is 1.1 times its stability threshold, it diverges much faster than $\textsc{GD}(0.02)$ or $\textsc{StandardNesterov}(0.02, 0.9)$ do in an analogous situation.
One can easily imagine that this may be related to our observation that the latter two algorithms can train at the AEoS with the sharpness hovering a bit above the stability threshold, whereas heavy ball momentum is flung into a flatter region whenever the sharpness crosses the stability threshold even by a small amount.

\begin{figure}[h!]
    \includegraphics[width=13cm]{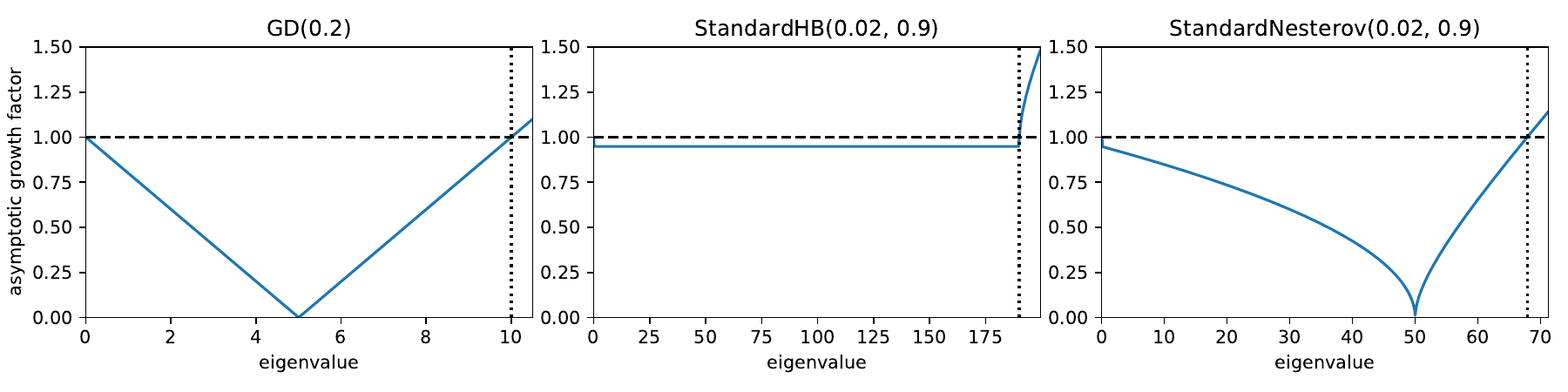}
    \caption{\textbf{Vanilla gradient descent and Nesterov momentum are both more robust than heavy ball momentum to eigenvalues shortly above the stability threshold.}  We plot the asymptotic growth factors of three algorithms: vanilla GD, heavy ball GD, and Nesterov GD.  That is, for each possible Hessian eigenvalue, we plot the multiplicative constant which the corresponding component of the iterate will be asymptotically multiplied by, over many steps of gradient descent.   The vertical dotted line marks each algorithm's stability threshold, and we plot the asymptotic growth factor (AGF) for eigenvalues up to 1.1 times that stability threshold.  Observe that for heavy ball momentum, the AGF at 1.1 times the stability threshold is much higher than for Nesterov momentum or vanilla gradient descent.}
    \label{fig:agf}
\end{figure}

\clearpage

\section{Corner case}

While we generally observe that during full-batch Adam, the preconditioned sharpness equilibrates at the stability threshold of frozen Adam, we have also consistently observed one notable corner case where this does not occur.
Namely, when running Adam at \emph{extremely small} learning rates, we often find that the preconditioned sharpness either flatlines short of the stability threshold, or increases at a very slow rate while remaining below the threshold.
These behaviors are evident in Figure \ref{fig:cornercase}, in which we train a CNN on CIFAR-10 using full-batch Adam at a range of very small learning rates.
(This task is the same one as in Figure \ref{fig:architectures}, left panel, just at smaller learning rates.)
Observe that at these small learning rates, the preconditioned sharpness fails to rise to the stability threshold.

\begin{figure}[h!]
    \includegraphics[width=12cm]{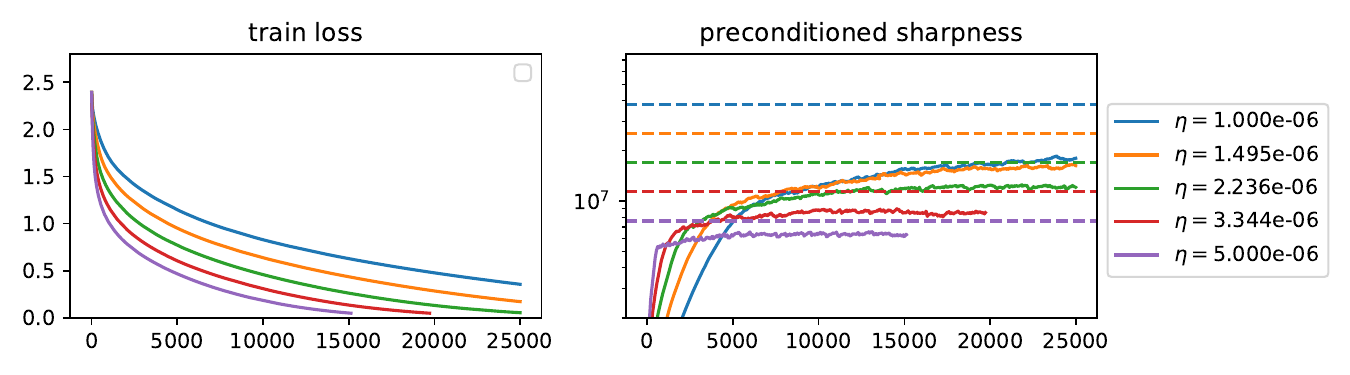}
    \caption{\textbf{When running Adam at \emph{extremely small} learning rates, the sharpness can fail to rise all the way to the stability threshold.} We train a CNN on CIFAR-10 using full-batch Adam at a range of extremely small learning rates.  Observe that the preconditioned sharpness fails to rise all the way to the stability threshold.}
    \label{fig:cornercase}
\end{figure}

Why does the preconditioned sharpness not rise to the stability threshold?
One clue is that this behavior seems to be related to adaptivity.
In Figure \ref{fig:cornercase-freeze}, we train using full-batch Adam at the green learning rate from Figure \ref{fig:cornercase}, and at step 6000 of training, we suddenly freeze the preconditioner (i.e. we stop updating $\boldsymbol{\nu}_t$).  
The train loss and preconditioned sharpness of frozen Adam are plotted in pink.
Observe that upon freezing the preconditioner, the preconditioned sharpness rapidly rises all the way to the stability threshold.
Thus, we can infer that it was due to Adam's adaptivity that the preconditioned sharpness was located below the stability threshold in the first place.

\begin{figure}[h!]
    \includegraphics[width=12cm]{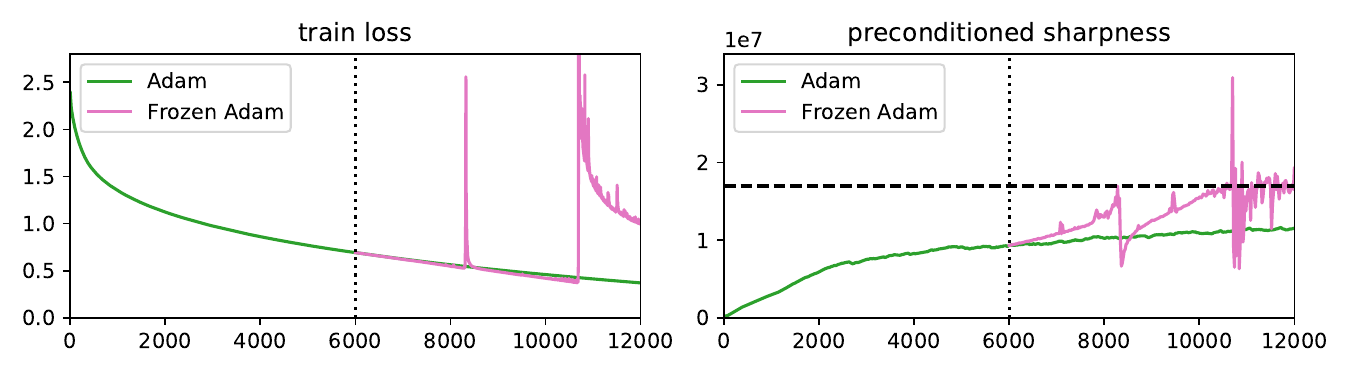}
    \caption{\textbf{If we freeze the preconditioner, the preconditioned sharpness immediately rises to the stability threshold.}  In green, we train a CNN on CIFAR-10 using full-batch Adam at step size $\eta = 2.236$e-6 (the green learning rate from Figure \ref{fig:cornercase}).  In pink, we freeze the preconditioner at step 6,000.  Observe that once the preconditioner is frozen, the preconditioned sharpness rises immediately to the stability threshold, suggesting that adaptive preconditioning was preventing it from rising to the stability threshold in the first place.}
    \label{fig:cornercase-freeze}
\end{figure}

We hypothesize that this behavior is related to the following property of momentum algorithms: momentum algorithms can generate large gradients even when they are stable.
For example, in Figure \ref{fig:momentum-large-gradients}, we optimize the quadratic objective $f(x) = \frac{1}{2} x^2$ using EMA heavy ball momentum at $\beta_1 = 0.9$ and three different stable learning rates (columns).
We set the initial iterate $x_0$ to 1, and the initial momentum vector $m_0$ to 0.2.
We run $\textsc{EmaHB}(\eta, 0.9)$ for thirty steps, plotting the iterate (top row), the momentum vector (middle row), and the gradient squared norm  (bottom row).
Observe that the gradient squared norm increases dramatically from its initial value.

We therefore hypothesize that the preconditioned sharpness $\lambda_1(\mathbf{P}_t^{-1} \mathbf{H}_t)$ fails to rise all the way to the stability threshold because the large gradients generated by the momentum algorithm (even while it is stable) are causing the preconditioner $\mathbf{P}_t$ to constantly increase.  This, in turn ``cancels out'' the effect of progressive sharpening, and causes the preconditioned sharpness $\lambda_1(\mathbf{P}_t^{-1} \mathbf{H}_t)$ to remain short of the stability threshold.

\begin{figure}[h!]
    \includegraphics[width=14cm]{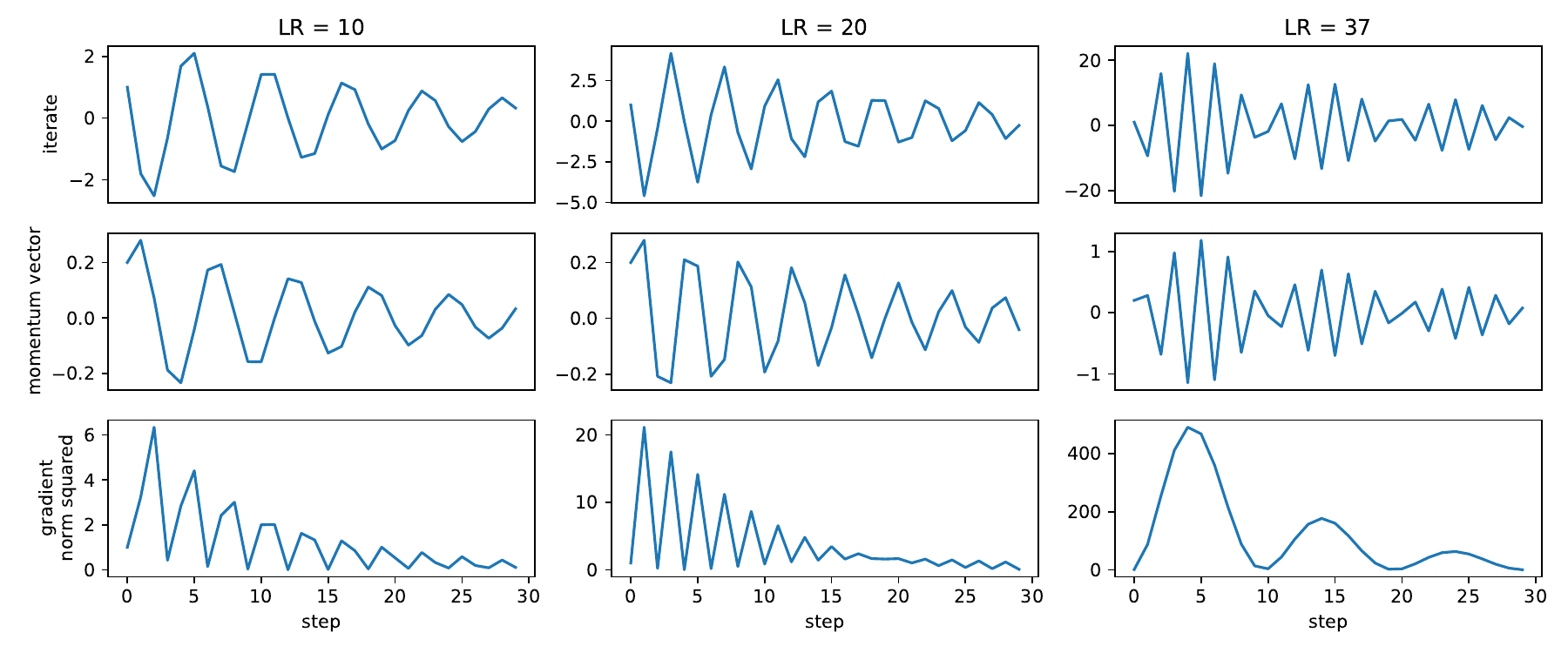}
    \caption{\textbf{Momentum can generate large gradients even when stable.}  We optimize the function $f(x) = \frac{1}{2} x^2$ using 30 steps of $\textsc{EmaHB}(\eta, 0.9)$ at various stable learning rates $\eta$. The top row plots the iterate, the second row the momentum vector (really a scalar), and the third row the squared gradient norm.  Observe that the gradient norm increases from its initial value, sometimes dramatically.  This does not happen with vanilla gradient descent.}
    \label{fig:momentum-large-gradients}
\end{figure}

\clearpage

\section{Further plots of zoom in}

\begin{figure}[h!]
    \includegraphics[width=14cm]{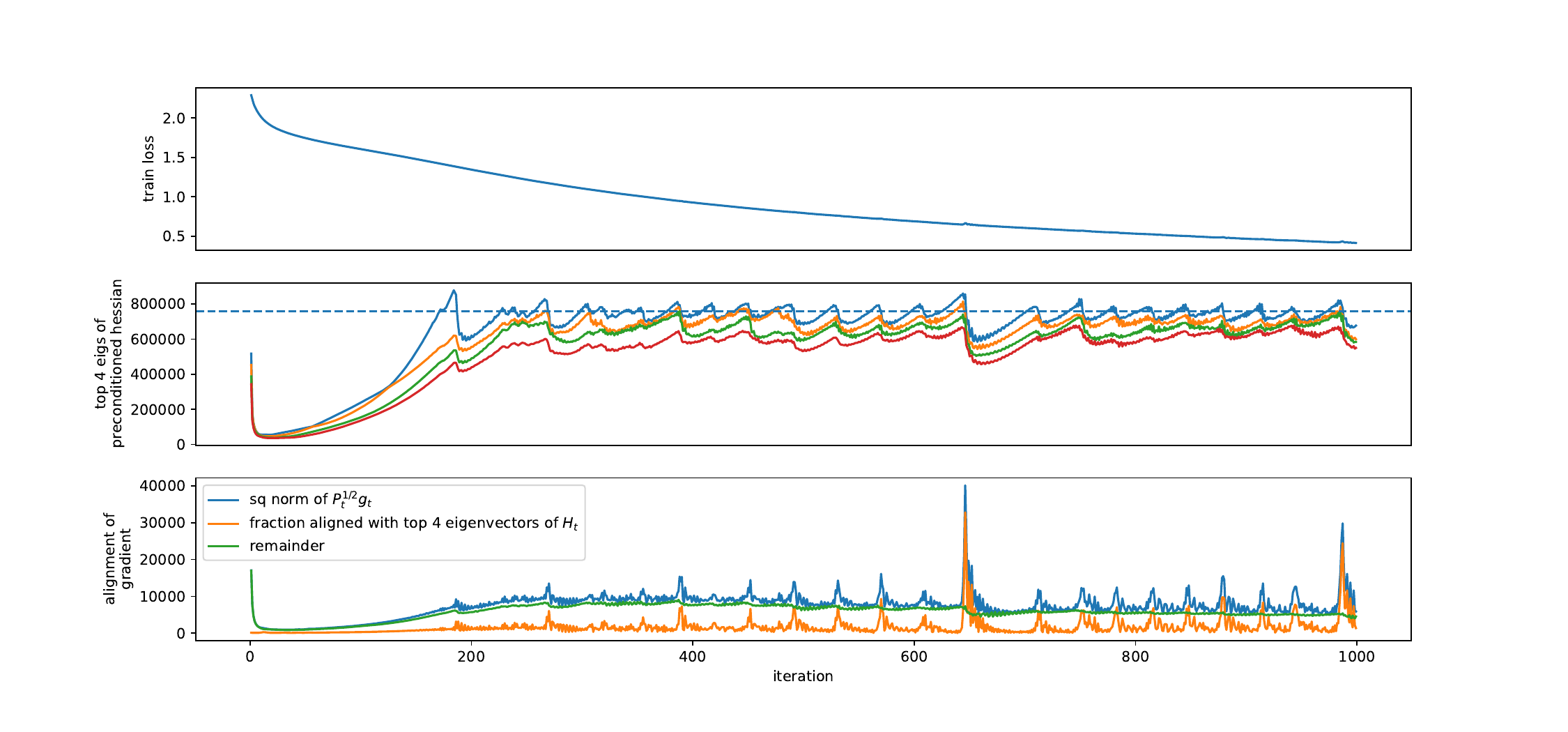}
    \caption{\textbf{Explosive growth in the (preconditioned) gradient is limited to the top eigenvectors of the preconditioned Hessian.}  We train a fully-connected network on CIFAR-10 using full-batch Adam.  In the top row, we plot the train loss.  In the middle row, we plot the evolution of the top four eigenvalues of the preconditioned Hessian $\mathbf{P}_t^{-1} \mathbf{H}_t$.  In the bottom row, we plot (1) in blue, the squared norm of the ``semi-preconditioned gradient'' $\mathbf{P}^{1/2} \mathbf{g}_t$, which is the quantity that is expected to grow explosively when the algorithm is unstable; in orange, the squared norm of the portion of  $\mathbf{P}^{1/2} \mathbf{g}_t$ that is aligned with the top four eigenvectors of $\mathbf{P}_t^{-1} \mathbf{H}_t$; and in green, the squared norm of the remainder of $\mathbf{P}^{1/2} \mathbf{g}_t$.  Observe that almost all of the spikes in the gradient norm can be attributed to the top eigenvectors of the preconditioned Hessian.}
    \label{fig:alignment}
\end{figure}

\newpage

\section{Test accuracy of Wide ResNet}

In \S \ref{sec:explain}, we saw that Adam tended to find solutions with higher sharpness (maximum Hessian eigenvalue) whenever the learning rate $\eta$ was smaller or the decay factor $\beta_2$ was smaller.
We now demonstrate that the test error tends to be higher in these same situations.
In Figure \ref{fig:wrn-test-error}, we plot the test error of the Wide ResNet from Figure  \ref{fig:hyperparameters} as a function of $\eta$ and $\beta_2$.
(For each training run, we select the test error at the moment the train loss first drops below the threshold of 0.05.)
As in Figure \ref{fig:hyperparameters}, we experiment with both the full-batch and large batch (batch size 4096) settings.
Observe that across both these settings, the test error tends to be higher whenever $\eta$ is small (subplots (a) or (b)) or whenever $\beta_2$ is small (subplots (c) or (d)).
We emphasize that we do not intend to claim that there is necessarily a causal connection between the test error and the sharpness --- we only mean to point out that the Adam's implicit bias towards low-curvature regions when $\eta$ is high or $\beta_2$ is high coincides with superior generalization performance.

\begin{figure}[h!]
    \includegraphics[width=14cm]{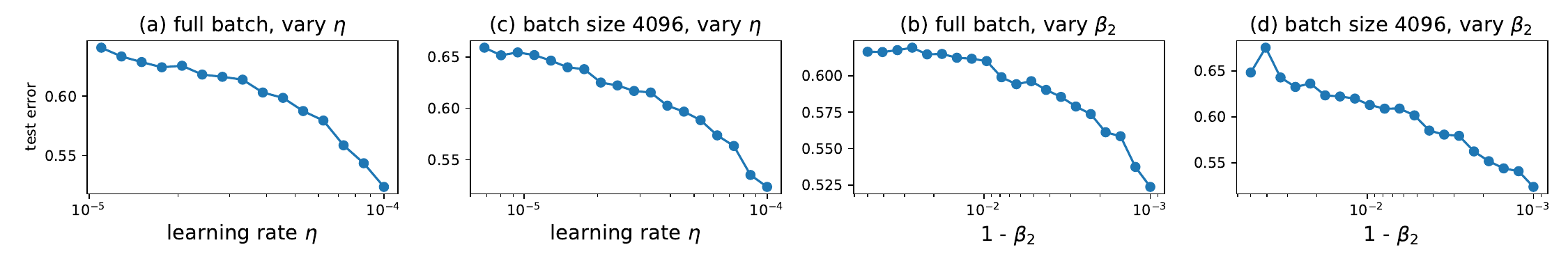}
    \caption{\textbf{Adam tends to generalize better when $\eta$ or $\beta_2$ is small.}  We train a Wide ResNet on CIFAR-100 using full-batch (panels a, c) or large-batch (panels b, d) Adam at various $\eta$ (panels a, b) or $\beta_2$ (panels b, d).  We plot the test error, measured when the training loss first drops below the threshold 0.05. }
    \label{fig:wrn-test-error}
\end{figure}

\newpage

\section{Experimental details}

\subsection{Architectures}

\paragraph{Fully-connected network} This network has 5 hidden layers of width 200 each.  The activation function is tanh, and the weights are initialized with variance 1 / fan\_in.
The fully-connected layers have bias parameters, which are initialized to zero.

\paragraph{CNN}
This network repeats (Conv => ReLU => maxpool) three times, then flattens the activations and does FC => ReLU => FC => ReLU => FC.
The first conv layer has 64 filters, a kernel size of 5, ``valid'' padding, a 3x3 window size, and stride 2.
The second conv layer has 96 filters, a kernel size of 3, ``valid'' padding, a 3x3 window size, and stride 2.
The third conv layer has 128 filters, a kernel size of 3, ``same'' padding, a 3x3 window size, and stride 2.
The first fully-connected layer has width 512, and the second has width 256.
All biases are initialized to zero, and all weights are initialized with variance 1 / fan\_in.

\paragraph{Wide ResNet (normalized)}
This Wide ResNet has four blocks per group, and a channel multiplier of 10.

\paragraph{Wide ResNet (un-normalized)}
When we remove batch normalization, we insert a learnable scalar multiplier (initialized to 1) after the residual branch of each block.

\subsection{Figures}

\paragraph{Figure 2}
We extracted the fixed preconditioner from step 1100 of a network trained using full-batch Adam at learning rate 2e-5.

\paragraph{Figure 4}
In the left pane, we trained for 2000 steps or until reaching a threshold loss value of 0.05.
In the middle pane, we trained for 5000 steps.
In the right pane, we trained until reaching a threshold loss value of 0.05.

\paragraph{Figure 5}
\textbf{Adam with bias correction}: $\beta_1 = 0.9$, $\beta_2 = 0.999$, $\epsilon=$1e-7, learning rates were:

\texttt{np.logspace(np.log10(3e-5), np.log10(1e-3), 5)}

\textbf{AdamW}: $\beta_1 = 0.9$, $\beta_2 = 0.999$, $\epsilon=$1e-7, learning rates were:

\texttt{np.logspace(np.log10(3e-5), np.log10(1e-3), 5)}

\textbf{Adafactor}: The decay rate was 0.8, the momentum was 0.9, epsilon was 1e-7, and the learning rates were

\texttt{np.logspace(-5, -3, 5)}

\textbf{Amsgrad}: $\beta_1 = 0.9$, $\beta_2 = 0.999$, $\epsilon=$1e-7, bias correction was off, and learning rates were:

\texttt{np.logspace(-5, -3, 5)}

\textbf{Padam}: $\beta_1 = 0.9$, $\beta_2 = 0.999$, $\epsilon=$1e-7, bias correction was off, the Padam exponent was 0.25, and learning rates were:

\texttt{np.logspace(-3, -1, 5)}

\textbf{Nadam}: $\beta_1 = 0.9$, $\beta_2 = 0.999$, $\epsilon=$1e-7, bias correction was off, and learning rates were:

\texttt{np.logspace(-5, -3, 5)}

\textbf{rmsprop}: decay was 0.995, epsilon was 1e-7, and learning rates were:

\texttt{np.logspace(-4, -3, 5)}

\textbf{Adagrad}: Learning rates were \texttt{np.logspace(-2, -1, 5)}.

\paragraph{Figure 10}
For the learning rate sweeps we fixed $\beta_2 = 0.999$, and for the $\beta_2$ sweeps we fixed $\eta =$ 1e-4.

\paragraph{Figure 11}
For Adam we used $\beta_1 = 0.9$ and $\beta_2 = 0.999$; for momentum we used $\beta_1 = 0.9$.

\end{document}